\newtheorem{theorem}{Theorem}%
\newtheorem{proposition}[theorem]{Proposition}
\newtheorem{lemma}[theorem]{Lemma}
\newtheorem{definition}{Definition}
\newtheorem{assumption}[definition]{Assumption}
\definecolor{darkpink}{rgb}{0.91, 0.33, 0.5}
\definecolor{puorange}{rgb}{0.80,0.40,0}
\definecolor{bluegray}{rgb}{0.04,0,0.7}
\definecolor{greengray}{rgb}{0.05,0.50,0.15}
\definecolor{darkbrown}{rgb}{0.40,0.2,0.05}
\definecolor{darkcyan}{rgb}{0,0.4,1}
\definecolor{black}{rgb}{0,0,0}
\definecolor{grey}{rgb}{0.93,0.93,0.93}
\definecolor{royalazure}{rgb}{0.0, 0.22, 0.66}
\newcommand{\red}[1]{{\color{purple}#1}}
\newcommand{\green}[1]{{\color{greengray}#1}}
\newcommand{\blue}[1]{{\color{royalazure}#1}}
\crefname{section}{Sec.}{Sections}
\crefname{theorem}{Thm.}{Thms.}
\crefname{lemma}{Lem.}{Lems.}
\crefname{corollary}{Cor.}{Cors.}
\crefname{proposition}{Prop.}{Props.}
\crefname{assumption}{Asm.}{Asms.}
\crefname{property}{Propt.}{Propts.}
\crefname{algorithm}{Alg.}{Algs.}
\crefname{appendix}{Appx.}{Appxs.}
\crefname{figure}{Fig.}{Figs.}
\crefname{table}{Tab.}{Tabs.}
\newcommand{\abs}[1]{\left| #1 \right|}
\newcommand{\norm}[1]{\left\lVert #1 \right\rVert}
\newcommand{\smallnorm}[1]{\lVert #1 \rVert}
\newcommand{\zeros}{\operatorname{\mathbf 0}}
\newcommand{\tr}{\operatorname{tr}}
\newcommand{\ip}[1]{{\left\langle #1 \right\rangle}}
\newcommand{\ipsmall}[1]{{\langle #1 \rangle}}
\newcommand{\pow}[1]{^{(#1)}}
\DeclareMathOperator*{\argmin}{arg\,min}
\newcommand{\br}[1]{\ensuremath{\left\{#1\right\}}}
\newcommand{\mc}[1]{\mathcal{#1}}
\newcommand \grad {\nabla}
\newcommand{\R}{\mathbb{R}}
\newcommand{\Ex}{\mathbb{E}}
\newcommand{\E}[2]{\ensuremath{{\mathbb E}_{#1}\left[#2\right]}}
\newcommand{\sbr}[1]{\ensuremath{\left[#1\right]}}
\newcommand{\p}[1]{\ensuremath{\left(#1\right)}}
\newcommand{\Z}{\mathbf{Z}}
\newcommand{\U}{\mathbf{U}}
\newcommand{\x}{\mathbf{x}}
\newcommand{\y}{\mathbf{y}}
\newcommand{\z}{\mathbf{z}}
\renewcommand{\u}{\mathbf{u}}
\renewcommand{\v}{\mathbf{v}}
\newcommand{\s}{\mathbf{s}}
\renewcommand{\r}{\mathbf{r}}
\newcommand{\e}{\mathbf{e}}
\newcommand{\btheta}{\boldsymbol{\theta}}
\newcommand{\w}{\mathbf{w}}
\newcommand{\W}{\mathbf{W}}
\newcommand{\A}{\mathbf{A}}
\newcommand{\B}{\mathbf{B}}
\newcommand{\g}{\mathbf{g}}
\newcommand{\logdet}[1]{\log\left|\operatorname{det}\left(#1\right)\right|}
\newcommand{\laplace}{\operatorname{Laplace}}
\newcommand{\etau}{\eta_{\mathrm{u}}}
\newcommand{\etap}{\eta_{\mathrm{p}}}
\newcommand{\etaa}{\eta_{\mathrm{a}}}
\newcommand{\amari}{\operatorname{AD}}
\newcommand{\Lunsup}{L_{\mathrm{u}}}
\newcommand{\Lsup}{L_{\mathrm{s}}}
\newcommand{\const}{\mathrm{const}}
\newcommand{\Fro}{\mathrm{F}}
\newcommand{\algoname}{MultiICA\xspace}
\title{Supervised Stochastic Gradient Algorithms\\ for Multi-Trial Source Separation}
\author{Ronak Mehta$^{1}$ \qquad Mateus Piovezan Otto$^{1}$ \qquad Noah Stanis$^{2}$ \vspace{0.1cm}\\
Azadeh Yazdan-Shahmorad$^{2}$ \qquad Zaid Harchaoui$^1$ \vspace{0.3cm} \\
{
\small $^1$Department of Statistics, University of Washington
\qquad
$^2$Department of Bioengineering, University of Washington
}
}
\date{\today}
\begin{document}

\maketitle

\begin{abstract}
We develop a stochastic algorithm for independent component analysis that incorporates multi-trial supervision, which is available in many scientific contexts. The method blends a proximal gradient-type algorithm in the space of invertible matrices with joint learning of a prediction model through backpropagation. We illustrate the proposed algorithm on synthetic and real data experiments. In particular, owing to the additional supervision, we observe an increased success rate of the non-convex optimization and the improved interpretability of the independent components.

\end{abstract}

\section{Introduction}\label{sec:intro}

A fundamental inverse problem in statistical/machine learning, signal processing, and time series analysis is independent component analysis, or ICA \citep{hyvarinen2001independent, cichocki2002adaptive, comon1010handbook}. Given a signal $\z \in \R^{C \times T}$ with $C$ components/channels and $T$ samples, the scientist pursues a \emph{source} signal $\s \in \R^{C \times T}$ and an invertible \emph{mixing matrix} $\A \in \R^{C \times C}$ such that 1) the rows of $\s$ are realizations of independent stochastic processes, and 2) the equality
\begin{align}
    \z = \A\s.
    \label{eq:ica}
\end{align}
is satisfied. Equivalently, we seek an invertible \emph{unmixing matrix} $\W \in \R^{C \times C}$ which recovers $\A^{-1}$ up to permutations and non-zero scalings of the rows (which do not affect the independence criterion). Applications include identifying individual voices/tracks in audio or disentangling correlated electrical activity from the brain. Often viewed as a data exploration/preprocessing technique, the independent sources are interpreted as underlying ``drivers'' of the signal, whereas the mixing matrix summarizes its connectivity/correlation structure via a simple linear transformation. Despite its widespread use in the natural and data sciences, the fully unsupervised setting of ICA limits its interpretability (ostensibly its main benefit over complex, nonlinear separation functions), as the contribution of each source may not be easily understood in a scientifically meaningful way. As an example of such an understanding, one may consider the identities of the voices in audio or the biological/behavioral function of the brain signals mentioned above. Moreover, from a technical viewpoint, ICA algorithms often hinge upon solving non-convex optimization problems over the space of invertible matrices, for which even state-of-the-art algorithms may fail consistently. Given the motivation from both a methods and an applications perspective, we develop in this paper a stochastic algorithm for solving the Infomax variant \citep{bell1995aninformation, lee1999independent, amari1999natural} of ICA that incorporates \emph{multi-trial supervision} in a flexible, model-agnostic fashion.

Precisely, consider a dataset of $N$ observations $(\z_1, \y_1), \ldots, (\z_N, \y_N)$, where each $\z_i \in \R^{C \times T}$ is a multivariate signal and each $\y_i = (y_{i, 1}, \ldots, y_{i, M})$ is a collection of discrete or continuous labels. We seek an unmixing matrix $\W$ that can be applied to \emph{all} signals, using the given labels as additional guidance to uncover the sources. This formalization is heavily motivated by the neuroscience application mentioned above, as in this setting, signals are often collected via multiple trials with possibly different conditions, interventions, and behaviors of the individual being measured. The aforementioned interpretability goal may be realized if (some number of) independent sources have a direct correspondence to each supervision label. Accordingly, assuming that $M \leq C$, we consider optimization problems of the form
\begin{align}
    \min_{\W, \btheta} \ \frac{1}{N}\sum_{i=1}^N \sbr{\ell_0(\W, \z_i) + \lambda \sum_{m=1}^M  \ell_m(\W_{m \cdot}^\top \z_i, y_{i, m}, \btheta_m)},
    \label{eq:infomax}
\end{align}
where $\ell_0$ is the unsupervised loss that promotes independence between the sources, $\ell_1, \ldots, \ell_M$ denote supervised losses that promote dependence between individual sources and labels, $\W_{m \cdot}$ denotes the $m$-th row of $\W$, $\btheta = (\btheta_1, \ldots, \btheta_M)$ denotes the parameters of predictive models (such as linear transformations or neural networks), and finally, $\lambda \geq 0$ is a balancing hyperparameter. In general, the objective may be non-convex in both $\W$ and $\btheta$, and each $\ell_m$ is only assumed to be differentiable in its first and third arguments.

\paragraph{Contributions}
We propose a stochastic optimization algorithm for~\eqref{eq:infomax} which combines a proximal gradient-type update in $\W$ (effectively handling the invertibility constraint) and a generic gradient-based update scheme of the user's choice for $\btheta$. In particular, one may use the Adam/AdamW class of algorithms \citep{loshchilov2018decoupled}, popularly applied in neural network training, to update $\btheta$.\footnote{In our experiments, we find that adaptive updates improve performance even when the objective is convex in $\btheta$.} We prove a monotonicity guarantee and convergence to a stationary point of the objective. On experiments with synthetic data, we demonstrate the ability of supervision to decrease the failure rate of non-convex optimization trajectories, as illustrated in \Cref{fig:nonconvex}. On real neural data collected from non-human primates, we demonstrate the ability of the individual sources to retain scientific information related to the experimental protocol and the animal's behavior during each trial.
We derive the algorithm in \Cref{sec:methods}, provide experiments on synthetic and real neural data in \Cref{sec:experiments}, and give concluding remarks in \Cref{sec:conclusion}.

\paragraph{Related Work}
Both direct gradient and natural/Riemannian gradient approaches have been employed for solving ICA under the Infomax principle, often facing convergence issues due to the likelihood-based loss $\ell_0$ including a log-determinant component \citep{martinez2017caveats}. While ICA algorithms that use a weighted sum of unsupervised and supervised components in the objective have been explored previously, they either apply gradient-based methods to $\W$ that inherit the aforementioned convergence issues or are model-specific with respect to $\btheta$, i.e., place strong assumptions on the supervised losses $\ell_1, \ldots, \ell_M$ \citep{chen2002supervised, kotani2004supervised, takabatake2007feature}. Other approaches for learning $\W$ exist, such as fixed-point/matrix decomposition schemes, which also may not adapt to arbitrary supervision models \citep{zou2022asupervised, su2024disentangling}. We maintain generic gradient-based optimization via backpropagation for $\btheta$, which is undisputably effective for supervised learning problems. For the $\W$ update, we are particularly inspired by the majorization-minimization class of approaches \citep{ono2011stable, ablin2019stochastic, schibler2020fast, brendel2020spatially, brendel2021accelerating, ikeshita2021block, ikeshita2022iss2}, of which \citet{ablin2019stochastic} minimizes a per-iteration objective row-by-row. We generalize their per-iteration objective both by incorporating a proximity term to stabilize the $\W$ trajectory and a linearized supervision term, which can be computed via backpropagation. A proximal approach for the related problem of Gaussian independent vector analysis has been explored \citep{cosserat2023aproximal}, but not for ICA with a generic likelihood and supervision model.
On the applied side, the usage of independent component/vector analysis on neural and cognition data is thoroughly established \citep{lehmann2022multi, moraes2023applying, yang2023constrained, fouladivanda2023joint, heurtebise2023multiview, keding2024effect, laport2024reproducibility, vu2024arobust, gjolbye2024speed, hu2025reference}. Although we focus on interpreting linearly mixed sources in this paper, nonlinear deep learning-based variants of ICA have also gained interest recently \citep{nguyen2021deep, li2022deep, narisawa2021independent, hermann2022large, romano2023using}.

\begin{figure}
    \centering
    \includegraphics[width=0.7\linewidth]{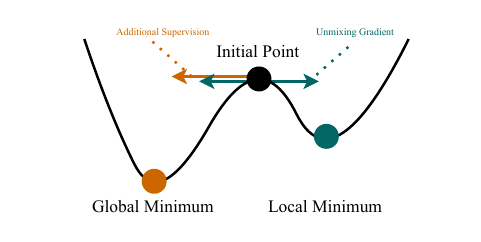}
    \caption{{\bf Non-Convex Optimization Landscape.} Gradient information from the unmixing (unsupervised) component of the objective may not easily differentiate between local and global minima, motivating supervision from auxiliary targets. }
    \label{fig:nonconvex}
\end{figure}

\section{Methods}\label{sec:methods}
\paragraph{Variational Objective}
To specify the method, we first write a variational form of the objective~\eqref{eq:infomax}, which introduces a third set of variables we denote as $\U \in \R^{N \times C \times T}$. The algorithm jointly minimizes over $(\W, \btheta, \U)$, and is fully specified by first providing full batch updates for each variable, and then describing stochastic estimates thereof. To proceed, we require the following mild assumption on our unmixing loss function $\ell_0$.
\begin{assumption}[Super-Gaussian Likelihood]\label{asm:supergaussian}
    For any invertible $\W \in \R^{C \times C}$ and $\z \in \R^{C \times T}$, it holds that 
    \begin{align*}
        \ell_0(\W, \z) = L(\W) + \frac{1}{T}\sum_{c=1}^C \sum_{t=1}^T g([\W \z]_{c, t}),
    \end{align*}
    where $x \mapsto e^{-g(x)}$ is integrable, $x \mapsto g(\sqrt{x})$ is increasing and concave on $(0, \infty)$, and $L(\W) :=  -\logdet{\W}$.
\end{assumption}
Under \Cref{asm:supergaussian}, a constant can be added to $g$ so that, without loss of generality, $e^{-g(\cdot)}$ is a probability density function for a \emph{super-Gaussian} random variable by definition. This includes the Laplace density $g(x) \sim \abs{x}$ or the Huber density $g(x) \sim h(x)$, where $h(x) = \frac{1}{2}x^2$ for $\abs{x} \leq 1$ and $h(x) = \abs{x} - 1/2$ otherwise. Such a function would naturally arise when using the density transformation formula to express the likelihood of each entry $z_{c, t}$ of $\z$ using the likelihood of each entry of the candidate source signal $\W \z$.
In turn, \citet[Theorem 1]{palmer2005variational} grants the variational form
\begin{align}
    \sum_{c=1}^C \sum_{t=1}^T g([\W \z]_{c, t}) = \min_{\u \geq \zeros_{C \times T}}  \sum_{c=1}^C \sum_{t=1}^T \frac{1}{2} u_{c, t} [\W \z]_{c, t}^2  + f(u_{c, t}), \label{eq:supergaussian}
\end{align}
where $\zeros_{C\times T}$ denotes the matrix of zeros in $\R^{C\times T}$, and $f: [0, \infty) \rightarrow \R$ is a convex function whose particular form is unimportant for our purposes. Combining~\eqref{eq:infomax},~\eqref{eq:supergaussian}, and \Cref{asm:supergaussian}, we derive the equivalent problem to~\eqref{eq:infomax} of minimizing
\begin{align}
    F(\W, \btheta, \U) := L(\W) + \frac{1}{N}\sum_{i=1}^N \sbr{\sum_{c=1}^C \sum_{t=1}^T \frac{1}{2} U_{i, c, t} [\W \z_i]_{c, t}^2 + f(U_{i, c, t}) + \lambda \sum_{m=1}^M  \ell_m(\W_{m \cdot}^\top \z_i, y_{i, m}, \btheta_m)}, \label{eq:full_objective}
\end{align}
over $\W$, $\btheta$, and $\U$, which ranges over the non-negative tensors within $\R^{N \times C \times T}$.
We proceed to describe the updates for the three sets of variables, defining the sequence $(\U\pow{k}, \btheta\pow{k}, \W\pow{k})_{k \geq 0}$. They are performed in a block coordinate-wise manner, ordered by the auxiliary variables $\U\pow{k}$ first, the model parameters $\btheta\pow{k}$ second, and the unmixing matrix $\W\pow{k}$ last. The majority of the complexity lies in the $\W\pow{k}$ update, whereas the other variables can be updated using exact minimizations or gradient-based updates.

\paragraph{Auxiliary Variables and Model Parameters Update}

For the auxiliary variables, we simply perform the exact minimization
\begin{align}
    \U\pow{k} \gets \argmin_{\U \geq \zeros_{N \times C \times T}} F(\W\pow{k-1}, \btheta\pow{k-1}, \U).\label{eq:aux_update}
\end{align}
Differentiating both sides of~\eqref{eq:supergaussian} with respect to each $[\W \z]_{c, t}$ yields that the optimum $\U\pow{k}$ is achieved by setting $U_{i, c, t}\pow{k} = g'([\W \z_i]_{c, t}) / [\W \z_i]_{c, t}$, which can be done in a vectorized manner. As mentioned in \Cref{sec:intro}, we apply a generic gradient-based update scheme for the model parameter $\btheta\pow{k}$. That is, for each $m = 1, \ldots, M$, we have
\begin{align*}
    \btheta\pow{k}_m \gets \operatorname{GradientBasedUpdate}(\btheta\pow{k-1}_m, \etap, \g\pow{k-1})
\end{align*}
for model parameter learning rate $\etap > 0$ and gradient formula
\begin{align}
    \g_m\pow{k} := \grad_{\btheta_m}F(\W\pow{k-1}, \btheta, \U\pow{k})\big{|}_{\btheta_m = \btheta\pow{k-1}_m} = \frac{1}{N}\sum_{i=1}^N \grad_{\btheta_m} \ell_m((\W\pow{k-1}_{m\cdot})^\top \z_i, y_{i, m}, \btheta_m)\big{|}_{\btheta_m = \btheta\pow{k-1}_m}.\label{eq:gvec}
\end{align}
For example, $\btheta\pow{k}_m = (1-\etap \mu)\btheta\pow{k}_m - \etap \g\pow{k}_m$ represents gradient descent with weight decay parameter $\mu > 0$. Algorithms such as AdamW \citep{loshchilov2018decoupled} can also be employed, with update
\begin{align*}
    \btheta\pow{k}_m = (1- \etap \mu)\btheta\pow{k-1}_m - \etap \lambda \frac{\mathbf{m}\pow{k}_m}{(\mathbf{v}\pow{k}_m)^{1/2} + \epsilon}
\end{align*}
for momentum-based stochastic estimate $\mathbf{m}\pow{k}_m$ of~\eqref{eq:gvec}, variance pre-conditioner $\mathbf{v}\pow{k}_m$, and tolerance parameter $\epsilon > 0$. The division and square root operations are interpreted element-wise. We employ this update in the experiments shown in \Cref{sec:experiments}.

\paragraph{Unmixing Matrix Update}

We consider a cyclic coordinate-wise procedure in which $\W\pow{k}$ will be defined by a sequence of intermediate values $\W\pow{k, 0}, \ldots, \W\pow{k, C}$, such that $\W\pow{k, 0} = \W\pow{k-1}$, $ \W\pow{k} = \W\pow{k, C}$, and $\W\pow{k, c}$ will differ from $\W\pow{k, c-1}$ by updating the $c$-th row based on minimizing an approximation of our original objective. In particular,
\begin{align}
    \W\pow{k, c} \gets \argmin_{\substack{\W_{j \cdot} = \W_{j \cdot}\pow{k, c-1}, j \neq c}} F\pow{k, c-1}(\W, \btheta\pow{k}, \U\pow{k}),\label{eq:W_update0}
\end{align}
where $F\pow{k, c-1}$ approximates~\eqref{eq:full_objective} in the region close to $\W\pow{k, c-1}$. The definition of $F\pow{k, c-1}$ will enforce that $\W\pow{k, c}$ remains invertible. By describing this per-iteration objective and the means to optimize it, we fully specify the update of the unmixing matrix and the overall algorithm. Three technical ingredients form the basis for this objective function: a linearization of the supervised component, a proximity term to promote closeness to $\W\pow{k, c-1}$, and a reparametrization that allows for the minimization to be solved in closed form.

To describe the first two components, we first write the original objective in a simplified manner using
\begin{align}
    F(\W, \btheta\pow{k}, \U\pow{k}) = \underbrace{L(\W) + \frac{1}{2}\sum_{c=1}^C \W_{c\cdot}^\top \A_c\pow{k} \W_{c\cdot}}_{\text{unsupervised component}} + \underbrace{\frac{\lambda}{N} \sum_{i=1}^N  \sum_{m=1}^M  \ell_m(\W_{m\cdot}^\top \z_i, y_{i, m}, \btheta_m\pow{k})}_{\text{supervised component}} + \const(\W),\label{eq:full_objective2}
\end{align}
where $\const(\cdot)$ is independent of its input and $\A_1\pow{k}, \dots, \A_C\pow{k}$ are $C$ square matrices defined by
\begin{align}
    \A_c\pow{k} := \frac{1}{NT}\sum_{i=1}^N \sum_{t=1}^T U_{c, i, t}\pow{k} \z_{i, t} \z_{i, t}^\top. \label{eq:Amat}
\end{align}
The supervised portion of~\eqref{eq:full_objective2} can be linearized by its matrix derivative. Letting $\ip{\A, \B} := \tr(\A^\top \B)$ for $\A, \B \in \R^{C \times C}$, we specify~\eqref{eq:W_update0} by defining
\begin{align}
    F\pow{k, c-1}(\W, \btheta\pow{k}, \U\pow{k}) := L(\W) + \frac{1}{2}\sum_{c=1}^C \W_{c\cdot}^\top \A_c\pow{k} \W_{c\cdot} + \lambda\ipsmall{\B_c\pow{k}, \W} + \frac{1}{2\etau}\norm{\W - \W\pow{k, c-1}}_{\Fro}^2,\label{eq:full_objective3}
\end{align}
for the square matrix
\begin{align}
    \B_c\pow{k} := \frac{1}{N} \sum_{i=1}^N  
    \begin{bmatrix}
        \grad_{\s} \ell_1(\s, y_{i, 1}, \btheta_1\pow{k})\big{|}_{\s = (\W_{1\cdot}\pow{k, c-1})^\top \z_i}\\
        \vdots\\
        \grad_{\s} \ell_M(\s, y_{i, M}, \btheta_M\pow{k})\big{|}_{\s = (\W_{M\cdot}\pow{k, c-1})^\top \z_i}\\
        \zeros_{(C - M) \times T}
    \end{bmatrix}
    \z_i^\top,  \label{eq:Bmat}
\end{align}
unmixing learning rate $\etau > 0$ and Frobenius norm $\norm{\cdot}_{\Fro}$. To motivate the third and final ingredient, notice the difficulty of optimizing the non-smooth, non-convex objective~\eqref{eq:full_objective3} resulting from the log-determinant term $L(\W)$. To handle this, we follow a similar reparametrization trick to \citet[Lemma 3]{ablin2019stochastic}. Let $\e_{l:m} \in \R^{C \times (m - l)}$ denote the matrix containing the $l$-th through $m$-th standard basis vectors along its columns, and notice that due to invertibility of $\W\pow{k, c-1}$, it holds that
\begin{align}
    \W\pow{k, c} = (\e_{1:c-1}, \r_c, \e_{c+1:C})^\top\W\pow{k, c-1}\label{eq:reparametrization}
\end{align}
for a vector satisfying $(\W\pow{k, c}_{c \cdot})^\top = \r_c^\top \W\pow{k, c-1}$, or equivalently, $\r_c = ((\W\pow{k, c-1})^\top)^{-1} \W\pow{k, c}_{c \cdot} \in \R^C$. By substituting $\W$ in~\eqref{eq:full_objective3} with the right-hand side of~\eqref{eq:reparametrization}, we can solve the per-iteration problem~\eqref{eq:W_update0} by optimizing for $\r_c$ directly over $\R^C$. To hint as to why this would be useful, observe that
\begin{align*}
    L(\W) = L((\e_{1:c-1}, \r_c, \e_{c+1:C})^\top\W\pow{k, c-1}) = \log \abs{r_{c, c}} + L(\W\pow{k, c-1}),
\end{align*}
where $r_{c, c} \neq 0$ for any feasible $\W$, reducing this term to a univariate function in the decision variables.
The exact solution to~\eqref{eq:W_update0} is given by \Cref{prop:update} below, where we also define the standard basis vector $\e_c := \e_{c:c}$.
\begin{restatable}{proposition}{update}\label{prop:update}
    Assume that $\A_c\pow{k}$ is positive definite for all $c \in [C]$. Each $\r_c$ from~\eqref{eq:reparametrization} is given by
    \begin{align*}
        \r_c := \mathbf{K}^{-1} \p{\frac{1}{r_{c, c}} \e_c + \mathbf{b}},
    \end{align*}
    for $\mathbf{K} := \W\pow{k, c-1} (\A_c\pow{k} + \etau^{-1}\mathbf{I})(\W\pow{k, c-1})^\top$, $ \mathbf{b} := \W\pow{k, c-1} (\etau^{-1}\W_{c\cdot}\pow{k, c-1} -  (\B_c\pow{k})_{c \cdot})$, and $r_{c, c} := \sqrt{\mathbf{K}^{-1}_{cc} + \tfrac{1}{4}(\mathbf{K}^{-1} \mathbf{b}})_c^2 +\tfrac{1}{2}(\mathbf{K}^{-1}\mathbf{b})_c$.
\end{restatable}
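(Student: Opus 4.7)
My approach is to substitute the reparametrization~\eqref{eq:reparametrization} into the per-iteration objective~\eqref{eq:full_objective3}, reducing the constrained minimization~\eqref{eq:W_update0} to an unconstrained one over $\r_c \in \R^C$. With $\W = (\e_{1:c-1}, \r_c, \e_{c+1:C})^\top \W\pow{k, c-1}$, the rows $\W_{j \cdot}$ are held at $\W_{j \cdot}\pow{k, c-1}$ for $j \neq c$, while $\W_{c \cdot} = (\W\pow{k, c-1})^\top \r_c$. A cofactor expansion along column $c$ gives $\det(\e_{1:c-1}, \r_c, \e_{c+1:C}) = r_{c, c}$, so $L(\W) = -\log|r_{c, c}| + L(\W\pow{k, c-1})$ reduces to a univariate function of $r_{c, c}$ alone.

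Next I would expand the quadratic, linear, and proximal terms of~\eqref{eq:full_objective3} under this substitution. Only the $j = c$ summand of $\tfrac{1}{2}\sum_j \W_{j \cdot}^\top \A_j\pow{k} \W_{j \cdot}$ varies, contributing $\tfrac{1}{2}\r_c^\top \W\pow{k, c-1} \A_c\pow{k} (\W\pow{k, c-1})^\top \r_c$; the inner-product term $\lambda\ipsmall{\B_c\pow{k}, \W}$ reduces to $\lambda\, \r_c^\top \W\pow{k, c-1} (\B_c\pow{k})_{c \cdot}$ plus a constant, since only row $c$ of $\W$ changes; and the $c$-th row of the Frobenius proximal adds both the quadratic $\tfrac{1}{2\etau}\r_c^\top \W\pow{k, c-1} (\W\pow{k, c-1})^\top \r_c$ and the linear $-\etau^{-1}\r_c^\top \W\pow{k, c-1} \W_{c \cdot}\pow{k, c-1}$. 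Grouping like terms yields
\[
\phi(\r_c) := F\pow{k, c-1}(\W, \btheta\pow{k}, \U\pow{k}) - \const = -\log|r_{c, c}| + \tfrac{1}{2}\r_c^\top \mathbf{K}\r_c - \r_c^\top \mathbf{b},
\]
with $\mathbf{K}$ and $\mathbf{b}$ matching their definitions in the proposition; positive definiteness of $\A_c\pow{k} + \etau^{-1}\mathbf{I}$ together with invertibility of $\W\pow{k, c-1}$ makes $\mathbf{K}$ positive definite. Setting $\grad_{\r_c} \phi = \mathbf{K}\r_c - \mathbf{b} - \tfrac{1}{r_{c, c}}\e_c = \zeros$ yields the implicit equation $\r_c = \mathbf{K}^{-1}(\tfrac{1}{r_{c, c}}\e_c + \mathbf{b})$, and taking the inner product with $\e_c$ produces the scalar quadratic $r_{c, c}^2 - (\mathbf{K}^{-1}\mathbf{b})_c\, r_{c, c} - (\mathbf{K}^{-1})_{c, c} = 0$, whose two roots are $\tfrac{1}{2}(\mathbf{K}^{-1}\mathbf{b})_c \pm \sqrt{\tfrac{1}{4}(\mathbf{K}^{-1}\mathbf{b})_c^2 + (\mathbf{K}^{-1})_{c, c}}$.

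The main obstacle is justifying why the positive root named in the statement is the correct choice, since both roots are stationary and yield invertible $\W$. Positive definiteness of $\mathbf{K}$ combined with convexity of $-\log|\,\cdot\,|$ on each open half-line makes $\phi$ strictly convex separately on $\{r_{c, c} > 0\}$ and $\{r_{c, c} < 0\}$, so each root is the unique minimizer on its own side. To rule out the negative branch, I would invoke the sign ambiguity inherent in ICA (flipping $\W_{c \cdot}$ preserves source independence) to restrict without loss of generality to the half-space matching $\operatorname{sign}(\det \W\pow{k, c-1})$, which corresponds to $r_{c, c} > 0$ under standard initialization and is preserved across all iterations by continuity of the updates (the no-update reparametrization $\r_c = \e_c$ gives $r_{c, c} = 1 > 0$).
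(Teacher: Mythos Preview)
Your proposal is correct and follows essentially the same route as the paper: substitute the reparametrization into~\eqref{eq:full_objective3}, collapse the log-determinant to $-\log|r_{c,c}|$, collect the quadratic and linear terms into $\tfrac{1}{2}\r_c^\top\mathbf{K}\r_c - \r_c^\top\mathbf{b}$, and solve the resulting first-order condition via the scalar quadratic in $r_{c,c}$. The paper packages the last step as a standalone lemma (\Cref{lem:update}) and then reduces to it, whereas you carry out the same computation inline; the algebraic content is identical. Your discussion of why the positive root should be selected is in fact more thorough than the paper's own argument, which simply declares ``we fixed the convention that $r_j>0$'' without further justification.
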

As an important simplification, notice that the update given by \Cref{prop:update} only depends on $\B_c\pow{k}$ through its $c$-th row, which, inspecting ~\eqref{eq:Bmat}, in turn depends on $\W_{c\cdot}\pow{k, c-1}$. However, it always holds that $\W_{c\cdot}\pow{k, c-1} = \W_{c\cdot}\pow{k, 0} = \W_{c\cdot}\pow{k - 1}$, because the $c$-th row of $\W\pow{k-1}$ has not yet been updated. Thus, one can compute the matrix $\B_1\pow{k}$ only once and use the same one for all updates, which is reflected in \Cref{algo:multiica}.

\paragraph{Convergence Analysis}
To justify the three updates above, we also derive the conditions under which they yield a monotonically non-increasing sequence of objective values, commonly sought in structured non-convex problems such as ICA and mixture modeling. We then convert this guarantee into an asymptotic convergence analysis of the sequence under assumptions on the optimization trajectory. For concreteness, we consider the gradient descent update $\btheta\pow{k}_m = (1-\etap \mu)\btheta\pow{k}_m - \etap \g\pow{k}_m$, which applies to the $\ell_2$-regularized objective $F_\mu(\W, \btheta, \U) = F(\W, \btheta, \U) + \frac{\mu}{2}\norm{\btheta}_2^2$. Monotonicity can be achieved when the supervised objective is \emph{smooth} with respect to the source and parameter. 
\begin{assumption}\label{asm:smoothness}
    Assume the following for any fixed target $\y = (y_1, \ldots, y_m)$, parameter $\btheta = (\btheta_1, \ldots, \btheta_M)$, and source component $\s \in \R^{T}$.
    The function $\s \mapsto \grad_{\s} \ell_m(\s, y_{m}, \btheta_m)$ is $L_{m}$-Lipschitz continuous w.r.t.~$\norm{\cdot}_{2}$, the function $\btheta_m \mapsto \grad_{\btheta} \ell_m(\s, y_{m}, \btheta_m)$ is $L_{\btheta}$ w.r.t.~$\norm{\cdot}_2$, and  $\s \mapsto \grad_{\btheta} \ell_m(\s, y_{m}, \btheta_m)$ is $\bar{L}$-Lipschitz continuous w.r.t.~$\norm{\cdot}_{2}$.
\end{assumption}
This is the only assumption required for the descent guarantee \Cref{lem:monotonicity}, which is proven in \Cref{sec:a:descent}.
\begin{restatable}{lemma}{monotonicity}\label{lem:monotonicity}
    Under \Cref{asm:supergaussian}, \Cref{asm:smoothness}, and the conditions
    \begin{align*}
        \etau \leq \sbr{2\lambda \p{\textstyle\tfrac{1}{N}\sum_{i=1}^N \norm{\z_i}_{2, 2}^2} \sqrt{\textstyle\sum_{m=1}^M L_m^2}}^{-1} \text{ and } \etap \leq \p{L_{\btheta} + \mu}^{-1},
    \end{align*}
    we have that for all $k \geq 1$, the inequality
    \begin{align}
        F_\mu(\W\pow{k}, \btheta\pow{k}, \U\pow{k}) \leq F_\mu(\W\pow{k-1}, \btheta\pow{k-1}, \U\pow{k-1}).\label{eq:descent}
    \end{align}
    holds. Consequently, if $F$ is bounded from below, then $F_\mu(\W\pow{k}, \btheta\pow{k}, \U\pow{k})$ converges to a finite limit $F^\star$ as $k \rightarrow \infty$. 
\end{restatable}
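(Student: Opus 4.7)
The plan is to establish \eqref{eq:descent} by splitting an outer iteration into its three block updates ($\U$, then $\btheta$, then the cyclic $\W$ sweep), showing each sub-update is non-increasing in $F_\mu$, and then chaining the resulting bounds. Asymptotic convergence of $F_\mu(\W\pow{k}, \btheta\pow{k}, \U\pow{k})$ to a finite $F^\star$ would then follow from the monotone convergence theorem for real sequences bounded below, combined with the elementary observation that $F_\mu \geq F$ is also bounded below whenever $F$ is.

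The auxiliary update \eqref{eq:aux_update} is an exact block minimization of $F$ in $\U$, so $F_\mu(\W\pow{k-1}, \btheta\pow{k-1}, \U\pow{k}) \leq F_\mu(\W\pow{k-1}, \btheta\pow{k-1}, \U\pow{k-1})$ holds trivially (the $\mu\norm{\btheta}_2^2/2$ term does not involve $\U$). For the $\btheta$ step, \Cref{asm:smoothness} implies that $\btheta \mapsto F(\W, \btheta, \U)$ has $L_\btheta$-Lipschitz gradient, hence $F_\mu$ has $(L_\btheta + \mu)$-Lipschitz gradient in $\btheta$; the standard descent lemma for gradient descent with $\etap \leq (L_\btheta + \mu)^{-1}$ then yields $F_\mu(\W\pow{k-1}, \btheta\pow{k}, \U\pow{k}) \leq F_\mu(\W\pow{k-1}, \btheta\pow{k-1}, \U\pow{k})$.

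The bulk of the analysis is the $\W$ step. The key claim is that the surrogate $F\pow{k, c-1}(\cdot, \btheta\pow{k}, \U\pow{k})$ is a majorizer of $F(\cdot, \btheta\pow{k}, \U\pow{k})$ (up to a $\W$-independent additive constant) over matrices $\W$ agreeing with $\W\pow{k, c-1}$ in every row but the $c$-th, with tightness at $\W\pow{k, c-1}$. Comparing \eqref{eq:full_objective2} to \eqref{eq:full_objective3}, the log-determinant $L(\W)$ and unsupervised quadratic are retained exactly, so the majorization reduces to applying the standard smoothness inequality to the supervised component $\phi(\W) := (\lambda/N) \sum_{i, m} \ell_m(\W_{m\cdot}^\top \z_i, y_{i,m}, \btheta_m\pow{k})$, whose gradient at $\W\pow{k, c-1}$ equals $\lambda \B_c\pow{k}$ by the chain rule. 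Combining \Cref{asm:smoothness} with the chain-rule identity $\grad_{\W_{c\cdot}} \ell_m(\W_{m\cdot}^\top \z_i, \cdot, \cdot) = \z_i \grad_\s \ell_m \cdot \ind_{m = c}$ together with a Cauchy--Schwarz step over $i$ and $m$, I would derive that $\phi$ has $\lambda \p{\tfrac{1}{N}\sum_i \norm{\z_i}_{2, 2}^2} \sqrt{\sum_m L_m^2}$-Lipschitz gradient under row-$c$ perturbations. The step-size condition on $\etau$ then guarantees that the proximity coefficient $(2\etau)^{-1}$ dominates this Lipschitz constant (with the factor of $2$ absorbing the $1/2$ appearing in the descent inequality), validating the majorization. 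Since \Cref{prop:update} supplies the exact row-$c$ minimizer of $F\pow{k, c-1}$ while the reparametrization \eqref{eq:reparametrization} preserves invertibility, we conclude $F(\W\pow{k, c}, \btheta\pow{k}, \U\pow{k}) \leq F(\W\pow{k, c-1}, \btheta\pow{k}, \U\pow{k})$, and telescoping over $c \in [C]$ gives descent for the full $\W$ sweep.

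The hard part will be formalizing the row-wise smoothness bound: one must carefully track how row-$c$ perturbations propagate through the matrix-valued gradient of $\phi$ and aggregate contributions across the $M$ supervised components via Cauchy--Schwarz to produce the exact constant $\sqrt{\sum_m L_m^2} \cdot (\tfrac{1}{N}) \sum_i \norm{\z_i}_{2, 2}^2$. A secondary subtlety is the need to ensure that each $\W\pow{k, c}$ remains invertible along the trajectory, which is handled by \eqref{eq:reparametrization} together with the positive-definiteness of $\A_c\pow{k}$ assumed in \Cref{prop:update}. Chaining the three sub-descents then produces \eqref{eq:descent}, and applying monotone convergence to the bounded-below sequence $F_\mu(\W\pow{k}, \btheta\pow{k}, \U\pow{k})$ completes the proof.
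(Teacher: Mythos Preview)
Your proposal is correct and follows essentially the same approach as the paper: both decompose the outer iteration into the three block updates, handle the $\U$ step by exact minimization, the $\btheta$ step by the standard descent lemma for $(L_\btheta+\mu)$-smooth functions, and the $\W$ step by showing $F\pow{k,c-1}$ majorizes $F$ via the $L_{\W}$-smoothness bound on the supervised component (the paper's \eqref{eq:smoothness}), then chaining over $c$ and applying monotone convergence. The only minor remark is that the stated step-size $\etau \leq (2\lambda L_{\W})^{-1}$ is actually twice as restrictive as needed for bare monotonicity here; the extra factor of $2$ is slack the paper exploits later in \Cref{lem:descent_stat}, not something ``absorbed'' by the $1/2$ in the descent inequality as you suggest.
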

The result is proven by combining descent inequalities for each update, including each inner loop iteration of updates to the rows of the unmixing matrix. Practically, we see that $\etau$ can be scaled using the hyperparameter $\lambda$ and the average spectral norm $\norm{\cdot}_{2, 2}$ of the input signals, which can be normalized during preprocessing. For convergence to a stationary point, we require a number of additional assumptions and a slight modification of the algorithm, which introduces a proximity term to the $\U\pow{k}$ update. First, we define $\norm{\U}_{\Fro}^2 := \sum_{i, c, t} U_{i, c, t}^2$ for $\U \in \R^{N \times C \times T}$, and we change~\eqref{eq:aux_update} to read as
\begin{align}
    \U\pow{k} \gets \argmin_{\U \geq \zeros^{N \times C \times T}} \br{F(\W\pow{k-1}, \btheta\pow{k-1}, \U) + \frac{1}{2\etaa} \sum_{i, c, t} (U_{i, c, t} - U_{i, c, t}\pow{k-1})^2},\label{eq:aux_update2}
\end{align}
which requires knowledge of the function $f$ in~\eqref{eq:full_objective} in order to implement. We find in experimentation that the proximal variant of the update above performs indistinguishably from the exact minimization, which can be implemented without ever deriving $f$ for a particular likelihood. However, this variant leads to stronger theoretical guarantees. To gracefully handle the requirement that $\U \geq \zeros_{N \times C \times T}$ when minimizing~\eqref{eq:full_objective}, we define the convex indicator function $\iota_{+}: \R^{N \times C \times T} \rightarrow \br{0, + \infty}$ such that $\iota_{+}(\U) = 0$ when $\U \geq \zeros_{N \times C \times T}$ and $\iota_{+}(\U) = +\infty$ otherwise, and denote
\begin{align}
    \Psi(\U, \btheta, \W) &= F_\mu(\W, \btheta, \U) + \iota_+(\U).\label{eq:full_objective_frechet}
\end{align}
Minimizing $\Psi$ over $(\U, \btheta, \W)$ is a non-smooth, non-convex optimization problem. To define stationarity formally, we require the concept of a \emph{Fr\'{e}chet subdifferential}; we leave the technical details to \Cref{sec:a:stationarity} and invite the reader to interpret stationarity in the usual ``zero gradient'' sense in the result below.
\begin{proposition}\label{prop:stationarity}
    Assume the conditions of \Cref{lem:monotonicity} and additionally, \Cref{asm:bdd1,asm:bdd2,asm:bdd3} from \Cref{sec:a:stationarity}. The sequence of iterates $(\W\pow{k}, \btheta\pow{k}, \U\pow{k})_{k \geq 0}$ produced by \Cref{algo:multiica} using the update~\eqref{eq:aux_update2} converges to a stationary point of the objective~\eqref{eq:full_objective_frechet}.
\end{proposition}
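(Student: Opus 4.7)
The plan is to cast the modified algorithm as a proximal alternating (Gauss-Seidel-type) minimization scheme on $\Psi$, and apply the standard three-step convergence framework for non-smooth non-convex block methods: (i) a \emph{sufficient decrease} property that strengthens \Cref{lem:monotonicity}, (ii) a \emph{relative-error subgradient bound}, and (iii) \emph{passage to the limit} along a convergent subsequence extracted from the bounded iterate trajectory. Together these imply that every cluster point of $(\W\pow{k}, \btheta\pow{k}, \U\pow{k})_{k \geq 0}$ is a Fr\'echet-stationary point of~\eqref{eq:full_objective_frechet}.

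For step (i), the idea is to refine \Cref{lem:monotonicity} into a quantitative descent inequality
\begin{align*}
    \Psi(\U\pow{k-1}, \btheta\pow{k-1}, \W\pow{k-1}) - \Psi(\U\pow{k}, \btheta\pow{k}, \W\pow{k}) \geq \alpha \p{\norm{\U\pow{k} - \U\pow{k-1}}_{\Fro}^2 + \norm{\btheta\pow{k} - \btheta\pow{k-1}}_2^2 + \norm{\W\pow{k} - \W\pow{k-1}}_{\Fro}^2}
\end{align*}
for some $\alpha > 0$. The proximal term $(2\etaa)^{-1}\norm{\U - \U\pow{k-1}}^2$ added in~\eqref{eq:aux_update2} yields a decrease of at least $(2\etaa)^{-1}\norm{\U\pow{k} - \U\pow{k-1}}_\Fro^2$ by strong convexity of the $\U$-subproblem; the $(L_\btheta + \mu)$-smoothness of $F_\mu$ in $\btheta$ with $\etap \leq (L_\btheta + \mu)^{-1}$ yields the standard descent estimate of at least $(2\etap)^{-1}\norm{\btheta\pow{k} - \btheta\pow{k-1}}_2^2$; and the $1/\etau$ strong-convexity of the per-row majorizer~\eqref{eq:full_objective3}, cascaded through the $C$ cyclic row updates, gives a decrease of at least $(2\etau)^{-1}\norm{\W\pow{k} - \W\pow{k-1}}_\Fro^2$. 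Telescoping and using boundedness-below of $F$ then gives $\sum_{k \geq 1} \norm{x\pow{k} - x\pow{k-1}}^2 < \infty$, so that successive iterate differences vanish.

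For step (ii), the plan is to construct, at each iteration $k$, an element $v\pow{k} \in \hat{\partial} \Psi(\U\pow{k}, \btheta\pow{k}, \W\pow{k})$ with $\norm{v\pow{k}} \leq C_\star \norm{x\pow{k} - x\pow{k-1}}$. Each block component follows from the first-order optimality of its update, together with a Lipschitz correction accounting for the gap between the arguments used when the block was updated and the final iterate $x\pow{k}$. The $\U$-optimality of~\eqref{eq:aux_update2} reads as $\etaa^{-1}(\U\pow{k-1} - \U\pow{k}) - \grad_\U F_\mu(\W\pow{k-1}, \btheta\pow{k-1}, \U\pow{k}) \in \partial \iota_+(\U\pow{k})$, so the $\U$-component of the desired subgradient is obtained by adding $\grad_\U F_\mu(\W\pow{k}, \btheta\pow{k}, \U\pow{k})$ and bounding the discrepancy via \Cref{asm:smoothness} together with the boundedness assumptions. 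An analogous derivation handles $\btheta$, while the $\W$-component is assembled from the $C$ row-wise optimality conditions encoded in \Cref{prop:update}, chained across the inner cyclic updates.

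Finally, for step (iii), the boundedness assumptions \Cref{asm:bdd1,asm:bdd2,asm:bdd3}---which, crucially, must keep $\W\pow{k}$ bounded and bounded away from singularity so that $L$ and $\grad L$ remain continuous along the trajectory---guarantee a convergent subsequence $x\pow{k_j} \to x^\star$. Then $v\pow{k_j} \to 0$ by step (ii), and closedness of the graph of $\hat{\partial} \Psi$ at $x^\star$ (which follows from smoothness of $F_\mu$ on a neighborhood of $x^\star$ and closedness of $\partial \iota_+$) yields $0 \in \hat{\partial} \Psi(x^\star)$, i.e., Fr\'echet-stationarity. The main obstacle is the interaction between the cyclic row updates of $\W$ and the non-smooth logdet barrier $L$: both the quantitative descent and the subgradient bound must be propagated across the $C$ inner updates while maintaining control over $\norm{\W\pow{k, c} - \W\pow{k, c-1}}_\Fro$ and the smallest singular value of $\W\pow{k, c}$, which is precisely what the appendix boundedness assumptions are designed to supply. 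Once this technical bookkeeping is in place, the standard machinery yields the stated convergence.
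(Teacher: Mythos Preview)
Your proposal is correct and follows essentially the same two-step architecture as the paper: a sufficient-decrease inequality (your step (i), the paper's \Cref{lem:descent_stat}) that forces $I\pow{k} := \smallnorm{\U\pow{k}-\U\pow{k-1}}_\Fro^2 + \smallnorm{\btheta\pow{k}-\btheta\pow{k-1}}_2^2 + \smallnorm{\W\pow{k}-\W\pow{k-1}}_\Fro^2 \to 0$, followed by a relative-error subgradient bound (your step (ii), the paper's \Cref{lem:subdiff_bd1} and \Cref{prop:subdiff_bd2}) of the form $\smallnorm{G\pow{k}}_\Fro^2 \leq C_0 I\pow{k}$, with the cyclic row updates handled exactly as you describe via chaining and \Cref{asm:bdd2}. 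The only substantive difference is that the paper \emph{defines} stationarity of the sequence as $\smallnorm{G\pow{k}}_\Fro \to 0$ and therefore stops after step (ii), whereas you add a step (iii) extracting a convergent subsequence and invoking graph-closedness of $\hat\partial\Psi$; your step (iii) is correct and yields the more standard conclusion that every cluster point is Fr\'echet-stationary, but it is not part of the paper's argument (and be aware that your description of the $\W$-decrease as coming from ``$1/\etau$ strong-convexity of the majorizer'' is slightly loose, since $F\pow{k,c-1}$ contains $L(\W)$---the actual mechanism is the proximal term combined with feasibility of $\W\pow{k,c-1}$, giving a $1/(4\etau)$ rather than $1/(2\etau)$ constant).
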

The proof is given in \Cref{sec:a:stationarity} and relies on two broad steps. The first is to use an improvement of the inequality~\eqref{eq:descent} to conclude that $(\W\pow{k-1} - \W\pow{k}, \btheta\pow{k-1} - \btheta\pow{k}, \U\pow{k-1} - \U\pow{k})$ converges to zero. Then, we compute a particular sequence of Fr\'{e}chet subgradients whose norms can be upper bounded by norms of the gaps between successive iterates. 
The second step is non-trivial, as the \Cref{algo:multiica} combines standard proximal, proximal gradient, and cyclic coordinate-wise updates which each have different optimality conditions.

\paragraph{Full Batch and Stochastic Variants}
We describe the computational complexity per iteration of both the full batch \Cref{algo:multiica} and a natural stochastic/incremental variant, which leverages the finite-sum structure of the various quantities at play. The three memory bottlenecks arise from the storage of the auxiliary variables $\U\pow{k}$, the model parameter $\btheta \in \R^d$, and the matrices $\A\pow{k}_1, \ldots, \A\pow{k}_C \in \R^{C \times C}$. Observing that $\A\pow{k}_{c-1}$ can be discarded on iterate $c$ of the inner loop, the total space complexity of the full batch algorithm is $O(NCT + C^2 + d)$. For the time complexity, assuming that the gradient-based $\btheta\pow{k}$ update can occur in $O(Nd)$ time, the remaining bottleneck is the computation of $\{(\A\pow{k}_c, \B_c\pow{k})\}_{c=1}^C$. Because $M \leq C$, this yields a time complexity of $O(N(d + C^{3} T))$.

While the space complexity simply reflects the memory needed to store the decision variables and data, the time complexity can be improved using sample average approximations of the matrices $(\A\pow{k}_1, \B\pow{k}_1), \ldots,(\A\pow{k}_C, \B\pow{k}_C)$. In particular, consider indices $\mc{I} = (i_1, \ldots, i_n)$ drawn uniformly without replacement from $[N]$ and $\mc{T} = (t_1, \ldots, t_\tau)$ drawn uniformly without replacement from $[T]$. Letting $\Ex_{k-1}$ denote the conditional expectation over the randomness in these indices given $(\U\pow{k-1}, \btheta\pow{k-1}, \W\pow{k-1})$, we have the three identities 
\begin{align*}
    \g_m\pow{k} &= \E{k-1}{\frac{1}{n}\sum_{i \in \mc{I}} \grad_{\btheta_m} \ell_m((\W\pow{k-1}_{m\cdot})^\top \z_i, y_{i, m}, \btheta_m)\big{|}_{\btheta_m = \btheta\pow{k-1}_m}}, &\text{ for } m = 1,\ldots, M\\
    \A_c\pow{k} &= \E{k-1}{\frac{1}{n\tau}\sum_{i \in \mc{I}} \sum_{t \in \mc{T}} U_{c, i, t}\pow{k} \z_{i, t} \z_{i, t}^\top \Big{|} \U\pow{k}}, &\text{ for } c = 1,\ldots, C\\
    \B_c\pow{k} &= \E{k-1}{\frac{1}{n} \sum_{i \in \mc{I}} \sum_{m=1}^M  \frac{T}{\tau} \sum_{t \in \mc{T}} \p{\grad_{\s} \ell_m(\s, y_{i, m}, \btheta_m\pow{k})\big{|}_{\s = (\W_{m\cdot}\pow{k, c-1})^\top \z_i}}_{\cdot t}\e_m\z_{i, t}^\top \Big{|} \btheta\pow{k} }.
\end{align*}
To derive a stochastic version of the algorithm, we not only use the integrands above as sample average approximations of their expectations,
but we also edit line~\ref{line:u_update} to only compute $U\pow{k}_{i, c, t}$ for $i \in \mc{I}$ and $t \in \mc{T}$, setting $U\pow{k}_{i, c, t} \gets U\pow{k-1}_{i, c, t}$ otherwise. The time complexity of the resulting algorithm is given by
\begin{align}
    O\p{n(d + MCT + \tau C^3)}.\label{eq:time_complexity}
\end{align}
The first term comes from the estimation of $\g\pow{k}$, the second term comes from the estimation of $\B_c\pow{k}$, and the third term results from the estimation of each $\A\pow{k}_c$. Importantly, even though only $\tau$ timesteps are used for $\B_c\pow{k}$, the full gradient with respect to the $\R^{C \times T}$-valued input is needed before subsetting. After this, the remaining computation needed to compute the $C \times C$ matrix is $O(nMC^2 \tau)$, which is absorbed into the final term of~\eqref{eq:time_complexity} because $M \leq C$.
We evaluate the method in a variety of synthetic and real data settings in the next section.

\begin{algorithm}[t]
\setstretch{1.2}
   \caption{Multi-Trial Supervised Independent Component Analysis (\algoname)}
   \label{algo:multiica}
    \begin{algorithmic}[1]
       \State {\bfseries Input:} Number of iterations $K$, learning rates $(\etap, \etau)$, balancing constant $\lambda$, loss $\ell$, and function $g$ (see \Cref{asm:supergaussian}).
       \State Randomly initialize $\W\pow{0}$ and $\btheta\pow{0}$.
       \For {$k=1$ {\bfseries to} $K$}
           \For {$m=1$ {\bfseries to} $M$}
                \State $\btheta\pow{k}_m = \operatorname{GradientBasedUpdate}(\btheta\pow{k-1}_m, \etap, \g\pow{k-1})$ (see~\eqref{eq:gvec}).\label{line:supervised_update}
           \EndFor
           \State $U\pow{k}_{i, c, t} := g'(\ipsmall{\W_{c\cdot}\pow{k-1}, \z_{i, t}}) / \ipsmall{\W_{c\cdot}\pow{k-1}, \z_{i, t}}$ for all $(i, c, t)$.\label{line:u_update}
           \State Compute $\B_1\pow{k}$ from~\eqref{eq:Bmat}.
           \For {$c=1$ {\bfseries to} $C$}
                \State Compute~$\A\pow{k}_c$ from~\eqref{eq:Amat}.
                \State Compute $\r_c$ via \Cref{prop:update} (setting $\B_c\pow{k} \gets \B_1\pow{k}$).\label{line:update}
                \State Compute $\W\pow{k, c}$ via~\eqref{eq:reparametrization}.
           \EndFor
           \State $\W\pow{k} = \W\pow{k, C}$.
       \EndFor
       \Return Unmixing matrix $\W\pow{K}$ and parameters $\btheta\pow{K}$.
    \end{algorithmic}
\end{algorithm}

\section{Experimental Results}\label{sec:experiments}
\begin{figure}[t]
    \centering
    \includegraphics[width=0.6\linewidth]{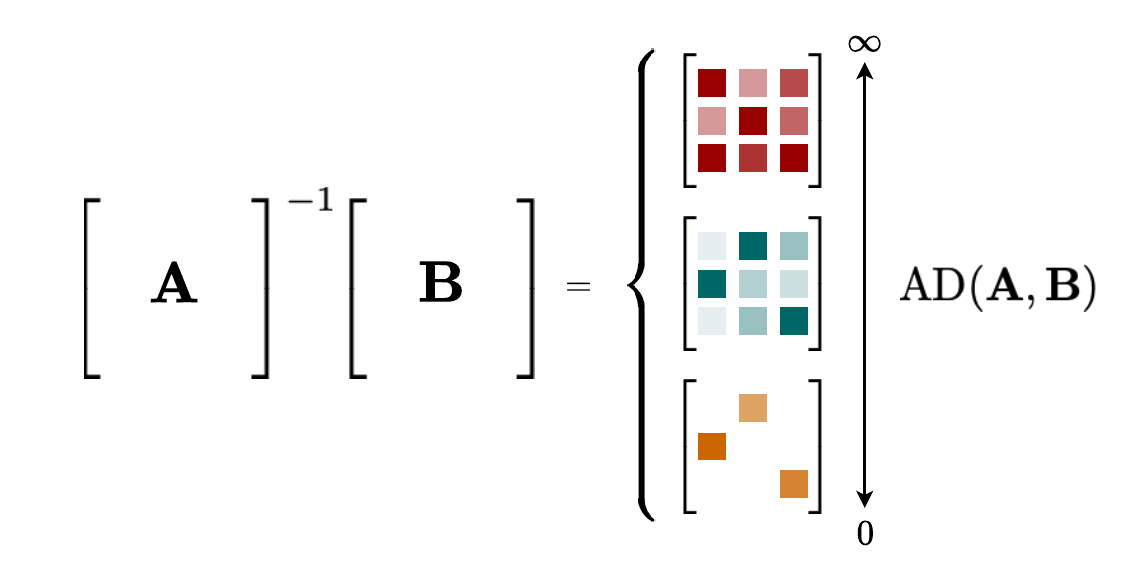}
    \caption{{\bf Amari Distance.} Illustration of various matrix products $\A^{-1}\B$ ranked by the Amari distance $\amari(\A, \B)$. Intensities of the colored squares represent absolute value/magnitude, so that empty squares indicate zero entries.}
    \label{fig:amari}
\end{figure}

\begin{figure*}[t]
    \centering
    \includegraphics[width=\linewidth]{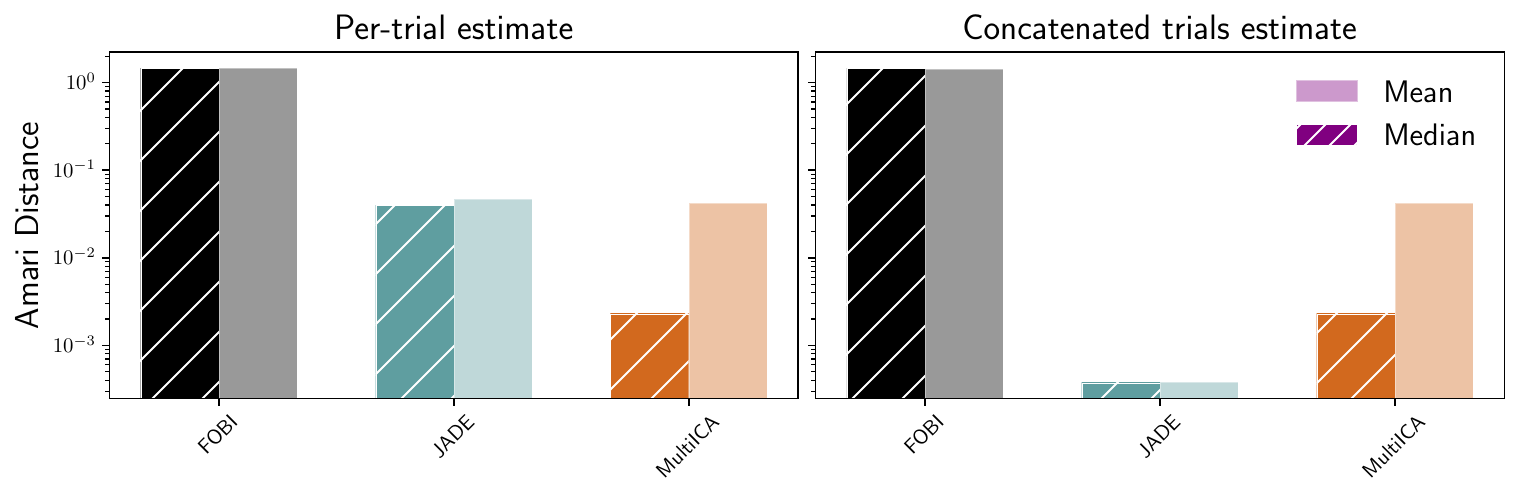}
    \caption{{\bf Results: Multiple Trials.} The left panel shows the mean and median Amari distances over $N = 80$ trials for FOBI and JADE against a run of \algoname. The right panel shows the mean and median Amari distances for FOBI and JADE applied to the concatenation of all $N = 80$ trials along the time axis compared to a run of \algoname. \algoname was run for $K = 10^4$ iterations. The mean and median Amari distance are taken over trials and $50$ seeds in the left panel, and over $50$ seeds in the right panel.}
    \label{fig:multi}
\end{figure*}

In our experiments, we aim to determine the benefits of incorporating both multiple trials and supervision into the learning process. 
Via both simulated and real data experiments, we also provide practical recommendations on hyperparameter tuning for \Cref{algo:multiica}. 
As for quantitative measurement, we recall from \Cref{sec:intro} that we are interested in recovering $\A$ via $\W^{-1}$ down to permutations and scalings of the rows. Thus, we measure success via the \emph{Amari distance} criterion
\begin{align}
    \amari(\W^{-1}, \A) = \sum_{j=1}^C \sbr{\p{\sum_{c=1}^C \frac{R_{jc}}{\max_{c'} R_{jc'}}-1} + \p{\sum_{c=1}^C \frac{R_{cj}}{\max_{c'} R_{c' j}}-1}}, \label{eq:amari}
\end{align}
where $R_{jc} := [\W \A]_{jc}$. This quantity is made zero if and only if $\W \A = \mathbf{P} \mathbf{D}$ for a permutation matrix $\mathbf{P} \in \br{0, 1}^{C \times C}$ and invertible diagonal matrix $\mathbf{D} \in \R^{C \times C}$, naturally inspriring the goal of making~\eqref{eq:amari} as small as possible (see \Cref{fig:amari} for an illustration). Notice that the Amari distance can only be computed if the true mixing matrix $\A$ is known, making it appropriate only for simulation studies. 

For real data experiments, we instead evaluate the method according to the scientific objectives outlined in \Cref{sec:intro}; we evaluate the ability of the estimated sources to retain an explanable dependence with the labels used for supervision. This is measured both quantitatively by prediction metrics such as accuracy and qualitatively using visualizations of the learned sources on held-out data. Code and additional details for reproducibility can be found at \href{https://ronakdm.github.io/_pages/software}{https://ronakdm.github.io/\_pages/software}.

\paragraph{Effect of Multiple Trials}
We evaluate \algoname against various baselines for single-trial (unsupervised) ICA using the Amari distance. We compare to the cumulant-based methods FOBI \citep{cardoso1989source} and JADE \citep{cardoso1993blind}. As is commonly used as a non-Gaussian synthetic data benchmark for ICA, we draw $\s_1, \ldots, \s_N \in \R^{C \times T}$ with independent $\laplace(1)$ entries and $\A$ with standard normal entries, for $(N, C, T) = (80, 10, 1000)$. First, for each trial, we compute the Amari distances $d_1, \ldots, d_N$, where $d_i =  \amari(\W^{-1}_{\mathrm{Base}}(\z_i), \A)$ and $\W_{\mathrm{Base}}(\z_i)$ denotes the output of the baseline consuming the data of the $i$-th trial. We then compared the mean and median Amari distance over trials to $\amari(\W^{-1}_{\mathrm{MultiICA}}, \A)$ for $(n, \tau, \eta_u) = (10, 64, 0.1)$, and averaged the results over 50 random seeds, as shown in the left panel of \Cref{fig:multi}. To understand the degree to which the observed improvement of \algoname over baselines is due to the larger sample size available for it (all $NT$ time points versus $T$ per-trial for the baselines), we compute the Amari distance $\amari(\W^{-1}_{\mathrm{Base}}([\z_1|\cdots|\z_N]), \A)$ where $[\z_1|\cdots|\z_N] \in \R^{C \times N T}$ is the concatenation of the trials along the time axis. As shown in the right panel of \Cref{fig:multi}, \algoname consistently estimates the unmixing matrix with Amari distance two orders of magnitude lower than FOBI, but is outperformed by JADE. Note, however, that both JADE and FOBI are \emph{full batch} methods, and that the time complexity of JADE on concatenated data is $O(C^5)$ per iteration with an $O(N T C^4)$ initialization. In contrast, using batch sizes $n \le N$ for the trials and $\tau \le T$ for the samples, \algoname converges to an approximate solution with an $O\p{n \tau C^3}$ cost per iteration. 

\begin{figure*}[t]
    \centering
    \includegraphics[width=\linewidth]{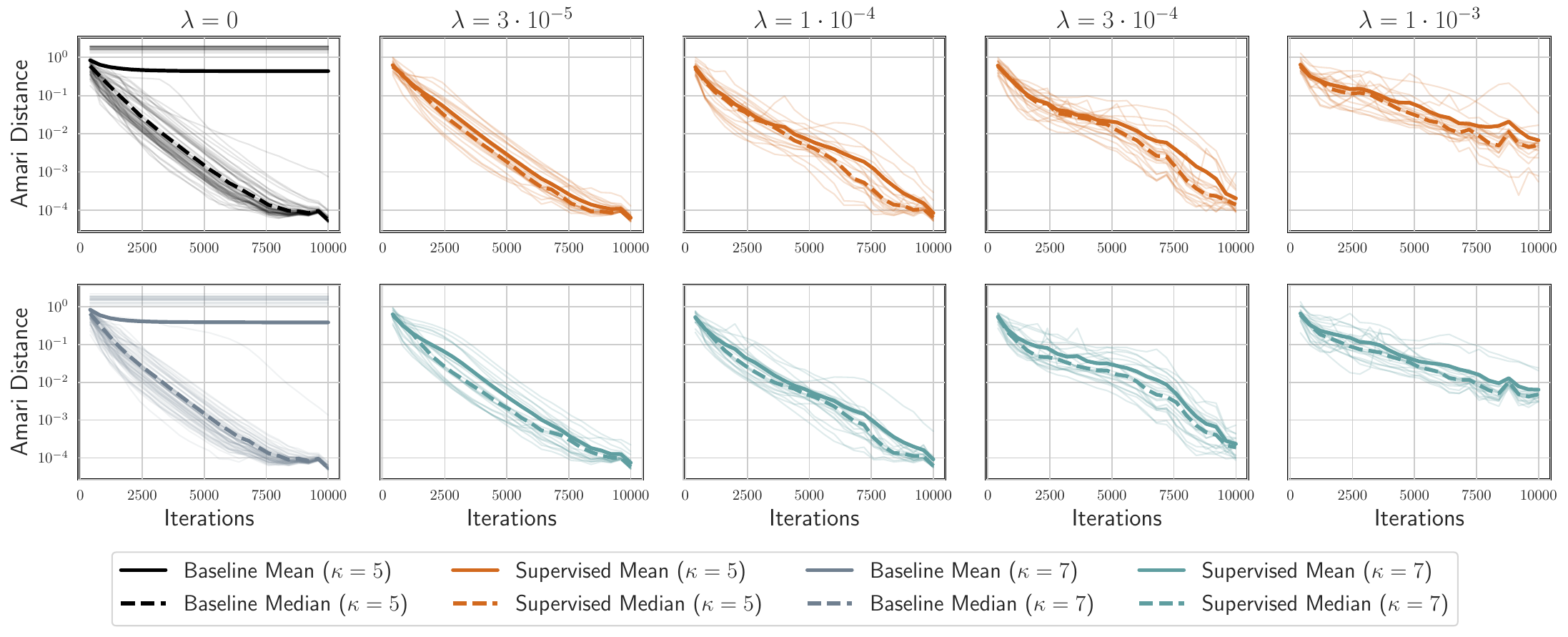}
    \caption{{\bf Results: Supervision.} In each plot, translucent lines depict the trajectories from applying \Cref{algo:multiica} with varying seeds, whereas the solid and dashed lines represent aggregations across seeds. ``Baseline'' refers to the $\lambda = 0$ (unsupervised) runs of \Cref{algo:multiica}, whereas $\kappa$ is the natural logarithm of the condition number of the mixing matrix $\A$.}
    \label{fig:supervision}
\end{figure*}

\begin{figure}[t]
    \centering
    \includegraphics[width=0.8\linewidth]{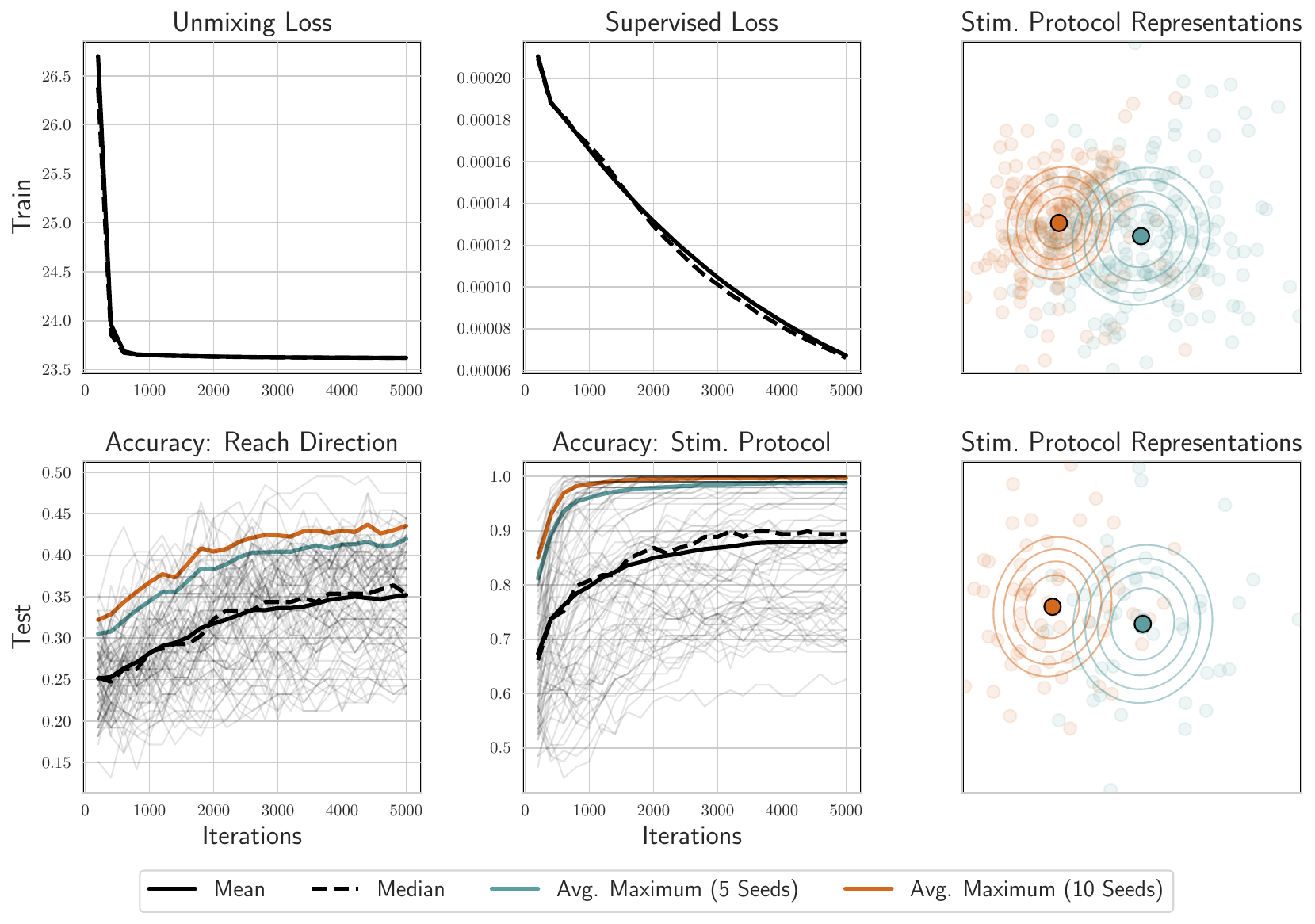}
    \caption{{\bf Results: $\mu$ECoG Data.} The top left panel plots the unmixing objective $\frac{1}{N}\sum_{i=1}^N \ell_0(\W\pow{k},  \z_i)$ whereas the top center panel plots the supervised objective $\frac{\lambda}{N} \sum_{i=1}^N \sum_{m=1}^M  \ell_m((\W_{m \cdot}\pow{k})^\top \z_i, y_{i, m}, \btheta_m\pow{k})$ against the iteration count $k$. 
    The bottom left and center plots show the test accuracy of predicting the reach direction (four balanced classes) and the stimulation protocol (two balanced classes) of the models parametrized by $\btheta\pow{k}_1$ and $\btheta\pow{k}_2$, respectively. The blue (resp.~brown) line represents the average performance of taking the maximum of 5 (resp.~10) seeds for all combinations of 5 (resp.~10) seeds out of 80 total. Finally, the top right and bottom right plots show vector representations of the source corresponding to the stimulation protocol, both on the training trials (top) and held out trials (bottom). The contour lines represent level sets of the Gaussian density fitted to each group of colored points, which indicate whether stimulation was (blue) or was not (brown) applied. }
    \label{fig:reach}
\end{figure}

\paragraph{Effect of Supervision}
Next, we isolate the effect of supervision and the relationship between the hyperparameters $(\lambda, \etau, \etap)$ used in \Cref{algo:multiica}. The sources are similarly constructed with independent $\laplace(1)$ entries $(N, C, T) = (6000, 10, 1000)$. We generate $M = 3$ regression targets using spectrogram features of the independent sources. That is, letting $\phi: \R^{T} \rightarrow \R^{d}$ denote a differentiable feature map. Then, for $i = 1, \ldots, N$ and $m = 1, \ldots, M$, we compute labels $y_{i, m} = \ip{\btheta_m, \phi(\s_{i, m})}$ for $\btheta_m$ generated with standard normal entries. For $\A$, we generate a matrix using the eigenspaces of the Hilbert matrix, with condition number $\exp(\kappa)$ parametrized by the constant $\kappa > 0$. The results for $\kappa \in \br{5, 7}$ and $\lambda$ ranging between $3 \cdot 10^{-5}$ and $1 \cdot 10^{-3}$ are shown in \Cref{fig:supervision} for 80 seeds and batch sizes $n = \tau = 128$. In all cases, we set $\etau = 0.001$, which generated the most stable and fast performance across various seeds, distributions, and settings. A crucial observation in \Cref{fig:supervision} is that for $\lambda = 0$, a non-trivial proportion of trajectories fail (e.g., converge to a local minimum) using only the unsupervised component of the objective for guidance. This effect catastrophically harms the mean performance across seeds. On the other hand, the final Amari distance actually \emph{increases} when the supervision parameter increases, which is likely due to the bias of the jointly learned parameters $(\btheta\pow{k}_m)_{k \geq 0}$. Thus, by applying a small amount of supervision, the mixing matrix can be recovered (down to scaling and permutation) with significantly higher probability. Finally, based on both the synthetic examples and upcoming real data example, we recommend that practitioners simply fix $(\lambda, \etau) = (0.00003, 0.001)$ and tune the model learning rate $\etap$, which is empirically seen to be 1-2 orders of magnitude smaller than $\etau$. Notably, the sequence $\W\pow{k}$ may converge much faster as $k \rightarrow \infty$ than $\btheta\pow{k}$ during the joint learning process.

\paragraph{Application to Neural Data}
To demonstrate the applicability of the \algoname method to neural data, we applied it to micro-electrocorticography ($\mu$ECoG) signals recorded from $C=8$ locations on the posterior parietal cortex of an adult male rhesus macaque performing a reaching task, both with and without external stimulation. The $\mu$ECoG signals were sampled at 1000Hz for 2 seconds, resulting in $T = 2000$ samples. Optogenetic stimulation was used to deactivate neurons that had been genetically modified to respond to light. The behavioral task involved reaching in one of four cardinal directions, with optogenetic deactivation applied in approximately 50\% of the $N = 394$ training trials. The reach direction and the stimulation protocol (whether stimulation was applied) generate $M = 2$ discrete supervised targets on each trial. We use the same predictive model as in \Cref{fig:supervision} and $\lambda = 0.00003$, with batch sizes $(n, \tau) = (32, 64)$.
As described at the start of this section, because there is no known mixing matrix $\A$, we evaluate the quality of $\W$ in \Cref{fig:reach} by illustrating how the learned sources retain information about stimulation and reach direction. 

Regarding the unsupervised and supervised losses, we observe the effect of the unmixing matrix converging much faster than the model parameters. Furthermore, while the unmixing matrix is the ultimate object of interest in our setting, if the supervised model learned by the algorithm is in fact useful for downstream applications, we recommend either taking the best model from multiple seeds (as in \Cref{fig:reach}) or training another supervised model on top of the independent sources. Indeed, the actual classification performance of the learned model may vary significantly (see \Cref{fig:reach}, bottom), even if the loss of the unmixing matrix does not. We find that 5 seeds are sufficient to get close to the best possible performance on average, which corresponds to the blue lines in \Cref{fig:reach}. To illustrate the explainability aspect, we plot vector representations of the learned sources on the right panels of \Cref{fig:reach}. These representations are the first two principal components of the flattened spectrograms of the source corresponding to the stimulation protocol target, showing an affine shift in the representation space associated with stimulation.

\section{Conclusion}\label{sec:conclusion}
We proposed an ICA algorithm that incorporates supervision from multiple trials with auxiliary target variables to improve the learning trajectory of the unmixing matrix. The algorithm combines proximal gradient-type updates for the unmixing matrix with generic, backpropagation-based learning for a supervised model that is jointly learned with the unmixing matrix. We illustrated the method on synthetic and real data, and discussed practical hyperparameter selection for users. Future work includes theoretical convergence analysis and incorporating orthogonality constraints, as in FastICA.

\paragraph{Acknowledgments}
This work was supported by NSF DMS-2023166, CCF-2019844, DMS-2134012, NIH. The authors are grateful to E. Shea-Brown and A. Shojaie for fruitful discussions.

\bibliographystyle{abbrvnat}
\bibliography{bib}

\begin{thebibliography}{41}
\providecommand{\natexlab}[1]{#1}
\providecommand{\url}[1]{\texttt{#1}}
\expandafter\ifx\csname urlstyle\endcsname\relax
  \providecommand{\doi}[1]{doi: #1}\else
  \providecommand{\doi}{doi: \begingroup \urlstyle{rm}\Url}\fi

\bibitem[Ablin et~al.(2019)Ablin, Gramfort, Cardoso, and
  Bach]{ablin2019stochastic}
P.~Ablin, A.~Gramfort, J.-F. Cardoso, and F.~Bach.
\newblock {Stochastic algorithms with descent guarantees for ICA}.
\newblock In \emph{AISTATS}, 2019.

\bibitem[Amari(1999)]{amari1999natural}
S.-I. Amari.
\newblock {Natural gradient learning for over- and under-complete bases in
  ICA}.
\newblock \emph{Neural Computation}, 1999.

\bibitem[Bell and Sejnowski(1995)]{bell1995aninformation}
A.~J. Bell and T.~J. Sejnowski.
\newblock {An information-maximization approach to blind separation and blind
  deconvolution}.
\newblock \emph{Neural Computation}, 1995.

\bibitem[Bolte et~al.(2014)Bolte, Sabach, and Teboulle]{bolte2014proximal}
J.~Bolte, S.~Sabach, and M.~Teboulle.
\newblock {Proximal alternating linearized minimization for nonconvex and
  nonsmooth problems}.
\newblock \emph{Mathematical Programming}, 2014.

\bibitem[Brendel and Kellermann(2021)]{brendel2021accelerating}
A.~Brendel and W.~Kellermann.
\newblock {Accelerating Auxiliary Function-Based Independent Vector Analysis}.
\newblock In \emph{ICASSP 2021}, 2021.

\bibitem[Brendel et~al.(2020)Brendel, Haubner, and
  Kellermann]{brendel2020spatially}
A.~Brendel, T.~Haubner, and W.~Kellermann.
\newblock {Spatially Guided Independent Vector Analysis}.
\newblock In \emph{ICASSP}, 2020.

\bibitem[Cardoso(1989)]{cardoso1989source}
J.-F. Cardoso.
\newblock {Source Separation using Higher Order Moments}.
\newblock In \emph{ICASSP}, 1989.

\bibitem[Cardoso and Souloumiac(1993)]{cardoso1993blind}
J.-F. Cardoso and A.~Souloumiac.
\newblock {Blind Beamforming for Non Gaussian Signals}.
\newblock In \emph{IEEE Proceedings F (Radar and Signal Processing)}, 1993.

\bibitem[Chen and He(2002)]{chen2002supervised}
Y.~Chen and Z.~He.
\newblock {Supervised-unsupervised combined neural learning for independent
  component analysis}.
\newblock In \emph{ICONIP}, 2002.

\bibitem[Cichocki and Amari(2002)]{cichocki2002adaptive}
A.~Cichocki and S.-I. Amari.
\newblock \emph{{Adaptive Blind Signal and Image Processing: Learning
  Algorithms and Applications}}.
\newblock John Wiley \& Sons, 2002.

\bibitem[Comon and Jutten(2010)]{comon1010handbook}
P.~Comon and C.~Jutten.
\newblock \emph{{Handbook of Blind Source Separation: Independent Component
  Analysis and Applications}}.
\newblock Elsevier, 2010.

\bibitem[Cosserat et~al.(2023)Cosserat, Gabrielson, Chouzenoux, Pesquet, and
  Adal{\i}]{cosserat2023aproximal}
C.~Cosserat, B.~Gabrielson, E.~Chouzenoux, J.-C. Pesquet, and T.~Adal{\i}.
\newblock {A Proximal Approach to IVA-G with Convergence Guarantees}.
\newblock In \emph{ICASSP 2023}, 2023.

\bibitem[Fouladivanda et~al.(2023)Fouladivanda, Iraji, Wu, and
  Calhoun]{fouladivanda2023joint}
M.~Fouladivanda, A.~Iraji, L.~Wu, and V.~D. Calhoun.
\newblock Joint structural and functional connectivity learning based
  independent component analysis.
\newblock In \emph{MLSP}, 2023.

\bibitem[Gj{\o}lbye et~al.(2024)Gj{\o}lbye, Skerath, Lehn-Schi{\o}ler, Langer,
  and Hansen]{gjolbye2024speed}
A.~Gj{\o}lbye, L.~Skerath, W.~Lehn-Schi{\o}ler, N.~Langer, and L.~K. Hansen.
\newblock {Speed: Scalable Preprocessing of EEG Data for Self-Supervised
  Learning}.
\newblock In \emph{MLSP}, 2024.

\bibitem[Hermann et~al.(2022)Hermann, Umlauf, and Franz]{hermann2022large}
M.~Hermann, G.~Umlauf, and M.~O. Franz.
\newblock {Large-Scale Independent Component Analysis By Speeding Up Lie Group
  Techniques}.
\newblock In \emph{ICASSP}, 2022.

\bibitem[Heurtebise et~al.(2023)Heurtebise, Ablin, and
  Gramfort]{heurtebise2023multiview}
A.~Heurtebise, P.~Ablin, and A.~Gramfort.
\newblock {Multiview Independent Component Analysis with Delays}.
\newblock In \emph{MLSP}, 2023.

\bibitem[Hu et~al.(2025)Hu, Liang, Adali, Zhu, Zhang, Jiang, Calhoun, and
  Qi]{hu2025reference}
J.~Hu, C.~Liang, T.~Adali, Q.~Zhu, D.~Zhang, R.~Jiang, V.~D. Calhoun, and
  S.~Qi.
\newblock {Reference-Guided Parallel Independent Component Analysis: Estimating
  Cognition Associated Multimodal Patterns In Schizophrenia}.
\newblock In \emph{ICASSP}, 2025.

\bibitem[Hyv\"{a}rinen et~al.(2001)Hyv\"{a}rinen, Karhunen, and
  Oja]{hyvarinen2001independent}
A.~Hyv\"{a}rinen, J.~Karhunen, and E.~Oja.
\newblock \emph{{Independent Component Analysis}}.
\newblock John Wiley \& Sons, 2001.

\bibitem[Ikeshita and Nakatani(2022)]{ikeshita2022iss2}
R.~Ikeshita and T.~Nakatani.
\newblock Iss2: An extension of iterative source steering algorithm for
  majorization-minimization-based independent vector analysis.
\newblock In \emph{EUSIPCO}, 2022.

\bibitem[Ikeshita et~al.(2021)Ikeshita, Nakatani, and Araki]{ikeshita2021block}
R.~Ikeshita, T.~Nakatani, and S.~Araki.
\newblock {Block Coordinate Descent Algorithms for Auxiliary-Function-Based
  Independent Vector Extraction}.
\newblock \emph{IEEE Transactions on Signal Processing}, 2021.

\bibitem[Keding et~al.(2024)Keding, Wilroth, Skoglund, and
  Alickovic]{keding2024effect}
O.~Keding, J.~Wilroth, M.~A. Skoglund, and E.~Alickovic.
\newblock {Effect of Independent Component Artifact Rejection on EEG-Based
  Auditory Attention Decoding}.
\newblock In \emph{EUSIPCO}, 2024.

\bibitem[Kotani et~al.(2004)Kotani, Takabatake, and
  Ozawa]{kotani2004supervised}
M.~Kotani, H.~Takabatake, and S.~Ozawa.
\newblock {Supervised Independent Component Analysis with Class Information}.
\newblock In \emph{ICONIP}, 2004.

\bibitem[Laport et~al.(2024)Laport, Dapena, Vu, Yang, Calhoun, and
  Adali]{laport2024reproducibility}
F.~Laport, A.~Dapena, T.~Vu, H.~Yang, V.~Calhoun, and T.~Adali.
\newblock {Reproducibility and Replicability in Neuroimaging: Constrained IVA
  as an Effective Assessment Tool}.
\newblock In \emph{EUSIPCO}, 2024.

\bibitem[Lee et~al.(1999)Lee, Girolami, and Sejnowski]{lee1999independent}
T.-W. Lee, M.~Girolami, and T.~J. Sejnowski.
\newblock {Independent component analysis using an extended infomax algorithm
  for mixed subgaussian and supergaussian sources}.
\newblock \emph{Neural Computation}, 1999.

\bibitem[Lehmann et~al.(2022)Lehmann, Acar, Hasija, Akhonda, Calhoun, Schreier,
  and Adali]{lehmann2022multi}
I.~Lehmann, E.~Acar, T.~Hasija, M.~Akhonda, V.~D. Calhoun, P.~J. Schreier, and
  T.~Adali.
\newblock {Multi-Task fMRI Data Fusion Using IVA and PARAFAC2}.
\newblock In \emph{ICASSP}, 2022.

\bibitem[Li et~al.(2022)Li, Yu, and Príncipe]{li2022deep}
H.~Li, S.~Yu, and J.~C. Príncipe.
\newblock {Deep Deterministic Independent Component Analysis for Hyperspectral
  Unmixing}.
\newblock In \emph{ICASSP}, 2022.

\bibitem[Loshchilov and Hutter(2019)]{loshchilov2018decoupled}
I.~Loshchilov and F.~Hutter.
\newblock {Decoupled Weight Decay Regularization}.
\newblock In \emph{ICLR}, 2019.

\bibitem[Montoya-Mart{\'i}nez et~al.(2017)Montoya-Mart{\'i}nez, Cardoso, and
  Gramfort]{martinez2017caveats}
J.~Montoya-Mart{\'i}nez, J.-F. Cardoso, and A.~Gramfort.
\newblock {Caveats with Stochastic Gradient and Maximum Likelihood Based ICA
  for EEG}.
\newblock In \emph{Latent Variable Analysis and Signal Separation}, 2017.

\bibitem[Moraes et~al.(2023)Moraes, Aristimunha, Dos~Santos, Pinaya,
  de~Camargo, Fantinato, and Neves]{moraes2023applying}
C.~P.~A. Moraes, B.~Aristimunha, L.~H. Dos~Santos, W.~H.~L. Pinaya, R.~Y.
  de~Camargo, D.~G. Fantinato, and A.~Neves.
\newblock {Applying Independent Vector Analysis on EEG-Based Motor Imagery
  Classification}.
\newblock In \emph{ICASSP}, 2023.

\bibitem[Narisawa et~al.(2021)Narisawa, Ikeshita, Takamune, Kitamura, Nakamura,
  Saruwatari, and Nakatani]{narisawa2021independent}
N.~Narisawa, R.~Ikeshita, N.~Takamune, D.~Kitamura, T.~Nakamura, H.~Saruwatari,
  and T.~Nakatani.
\newblock {Independent Deeply Learned Tensor Analysis for Determined Audio
  Source Separation}.
\newblock In \emph{EUSIPCO}, 2021.

\bibitem[Nesterov(2018)]{nesterov2018lectures}
Y.~Nesterov.
\newblock \emph{{Lectures on Convex Optimization}}.
\newblock Springer, 2018.

\bibitem[Nguyen et~al.(2021)Nguyen, Sadeghi, Ricci, and
  Alameda-Pineda]{nguyen2021deep}
V.-N. Nguyen, M.~Sadeghi, E.~Ricci, and X.~Alameda-Pineda.
\newblock Deep variational generative models for audio-visual speech
  separation.
\newblock In \emph{MLSP}, 2021.

\bibitem[Ono(2011)]{ono2011stable}
N.~Ono.
\newblock {Stable and fast update rules for independent vector analysis based
  on auxiliary function technique}.
\newblock In \emph{WASPAA}, 2011.

\bibitem[Palmer et~al.(2005)Palmer, Kreutz-Delgado, Rao, and
  Wipf]{palmer2005variational}
J.~Palmer, K.~Kreutz-Delgado, B.~Rao, and D.~Wipf.
\newblock {Variational EM Algorithms for Non-Gaussian Latent Variable Models}.
\newblock In \emph{NeurIPS}, 2005.

\bibitem[Romano et~al.(2023)Romano, Duarte, and Elias]{romano2023using}
M.~R. Romano, L.~T. Duarte, and L.~A. Elias.
\newblock {Using Sparse Autoencoders to Perform Blind Source Separation of
  High-Density Myoelectric Signal}.
\newblock In \emph{MLSP}, 2023.

\bibitem[Scheibler and Ono(2020)]{schibler2020fast}
R.~Scheibler and N.~Ono.
\newblock {Fast and Stable Blind Source Separation with Rank-1 Updates}.
\newblock In \emph{ICASSP}, 2020.

\bibitem[Su et~al.(2024)Su, Knowles, and Rabadan]{su2024disentangling}
J.~Su, D.~A. Knowles, and R.~Rabadan.
\newblock {Disentangling Interpretable Factors with Supervised Independent
  Subspace Principal Component Analysis}.
\newblock In \emph{NeurUPS}, 2024.

\bibitem[Takabatake et~al.(2007)Takabatake, Kotani, and
  Ozawa]{takabatake2007feature}
H.~Takabatake, M.~Kotani, and S.~Ozawa.
\newblock {Feature extraction by supervised independent component analysis
  based on category information}.
\newblock \emph{Electrical Engineering in Japan}, 2007.

\bibitem[Vu et~al.(2024)Vu, Yang, Laport, Gabrielson, Calhoun, and
  Adal{\i}]{vu2024arobust}
T.~Vu, H.~Yang, F.~Laport, B.~Gabrielson, V.~D. Calhoun, and T.~Adal{\i}.
\newblock {A Robust and Scalable Method with an Analytic Solution for
  Multi-Subject FMRI Data Analysis}.
\newblock In \emph{ICASSP}, 2024.

\bibitem[Yang et~al.(2023)Yang, Ghayem, Gabrielson, Akhonda, Calhoun, and
  Adali]{yang2023constrained}
H.~Yang, F.~Ghayem, B.~Gabrielson, M.~A. B.~S. Akhonda, V.~D. Calhoun, and
  T.~Adali.
\newblock {Constrained Independent Component Analysis Based on Entropy Bound
  Minimization for Subgroup Identification from Multi-subject fMRI Data}.
\newblock In \emph{ICASSP}, 2023.

\bibitem[Zou et~al.(2022)Zou, Zhao, Chu, Xu, Han, and Li]{zou2022asupervised}
Y.~Zou, X.~Zhao, Y.~Chu, W.~Xu, J.~Han, and W.~Li.
\newblock {A supervised independent component analysis algorithm for motion
  imagery-based brain computer interface}.
\newblock \emph{Biomedical Signal Processing and Control}, 2022.

\end{thebibliography}

\appendix

\section{Descent Guarantees}\label{sec:a:descent}
\subsection{Derivations}

We derive the update given in \Cref{prop:update}, starting with a technical lemma related to objectives that result from linear combinations of quadratic and log-determinant terms.
\begin{lemma}\label{lem:update}
    Consider a function $h: \R^c \rightarrow \R$ of the form
    \begin{align*}
        h(\r) := \frac{1}{2} \r^\top \mathbf{K} \r - \log \abs{r_j} - \ip{\mathbf{b}, \r},
    \end{align*}
    where $\mathbf{K} \in \R^{C \times C}$ is invertible, $b \in \R^C$, and we define $h(\r) = +\infty$ when $r_j = 0$ for $j \in [C]$. Then, the minimizer of $h$, denoted $\r^\star$ is given by
    \begin{align*}
        \r^\star := \mathbf{K}^{-1} \p{\frac{1}{r_{j}^\star} \e_c + \mathbf{b}},
    \end{align*}
    for $r_{j}^\star = \sqrt{\mathbf{K}^{-1}_{jj} + \tfrac{1}{4}\p{\mathbf{K}^{-1} \mathbf{b}}_j^2} +\tfrac{1}{2}(\mathbf{K}^{-1}\mathbf{b})_j$.
\end{lemma}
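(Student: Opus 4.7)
The plan is to apply first-order optimality and exploit the fact that the $-\log|r_j|$ term depends on $\r$ only through its $j$-th coordinate, which lets the stationarity system decouple. Wherever $r_j \ne 0$, $h$ is smooth with
\[
\nabla h(\r) = \mathbf{K}\r - \frac{1}{r_j}\,\e_j - \mathbf{b}.
\]
Setting this to zero and using invertibility of $\mathbf{K}$, I rearrange to the self-consistent identity
\[
\r = \mathbf{K}^{-1}\p{\tfrac{1}{r_j}\e_j + \mathbf{b}},
\]
which already matches the form of $\r^\star$ claimed in the statement, modulo pinning down $r_j^\star$. Reading off the $j$-th coordinate yields the scalar equation $r_j = \mathbf{K}^{-1}_{jj}/r_j + (\mathbf{K}^{-1}\mathbf{b})_j$, equivalently the quadratic $r_j^2 - (\mathbf{K}^{-1}\mathbf{b})_j\, r_j - \mathbf{K}^{-1}_{jj} = 0$. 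The quadratic formula produces two roots, and selecting the $+\sqrt{\,\cdot\,}$ branch recovers exactly the $r_j^\star$ in the lemma.

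Second, I need to verify that the resulting critical point really is the minimizer. The Hessian of $h$ at any $\r$ with $r_j \ne 0$ is $\mathbf{K} + r_j^{-2}\,\e_j\e_j^\top$. In the intended use within \Cref{prop:update}, where $\mathbf{K} = \W\pow{k,c-1}(\A_c\pow{k} + \etau^{-1}\id)(\W\pow{k,c-1})^\top$ and $\A_c\pow{k}$ is positive definite, $\mathbf{K}$ itself is positive definite, so $h$ is strictly convex on each open half-space $\{r_j > 0\}$ and $\{r_j < 0\}$. Since the product of the two roots equals $-\mathbf{K}^{-1}_{jj} < 0$, exactly one root lies in each half-space and is the unique minimum on that side.

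The only genuine subtlety is the choice between the positive and negative roots, since $h$ is not globally convex over $\R^C$ and both critical points are legitimate local minima (even with equal objective value in some degenerate cases, e.g.~$\mathbf{b} = \zeros$). The lemma fixes the positive root by convention. Within the downstream use in \Cref{prop:update}, the two roots correspond to two candidate updates of the $c$-th row of $\W$ that differ by a global sign flip, which leaves the log-likelihood, the independence criterion, and the Amari distance invariant, so the convention is without loss of generality. I do not expect any significant technical obstacle beyond this sign-convention bookkeeping; the calculation is essentially a matter of carrying out the first-order conditions carefully.
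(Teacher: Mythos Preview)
Your proposal is correct and follows essentially the same route as the paper: compute the gradient $\mathbf{K}\r - r_j^{-1}\e_j - \mathbf{b}$, solve for $\r$ in terms of $r_j$, extract the $j$-th coordinate to obtain the quadratic in $r_j$, and select the positive root by convention. Your Hessian argument and remark on the sign convention are slightly more explicit than what the paper provides, but the core reasoning is identical.
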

\begin{proof}
    Optimizing the function $h$ over $r_j > 0$ or $r_j < 0$ will result in the same gradient formula and a strongly convex function. Indeed, we have that
    \begin{align*}
        \grad h(\r) = \mathbf{K}  \r - \frac{1}{r_j} \e_j - \mathbf{b} = 0
    \end{align*}
    which implies the relationship
    \begin{align}
        \r = \mathbf{K}^{-1} \p{\frac{1}{r_j} \e_j + \mathbf{b}}. \label{eq:multi_minimizer}
    \end{align}
    Given $r_j$, it is easy to compute the rest of $\r$. By looking at the $j$-th coordinate of the relationship above, we have that
    \begin{align*}
        r_j^2  &= \mathbf{K}^{-1}_{jj} + r_j (\mathbf{K}^{-1} \mathbf{b})_j\\
        \iff r_j  &= \sqrt{\mathbf{K}^{-1}_{jj} + \tfrac{1}{4}(\mathbf{K}^{-1} \mathbf{b})_j^2} + \tfrac{1}{2}(\mathbf{K}^{-1} \mathbf{b})_j
    \end{align*}
    because we fixed the convention that $r_j > 0$. Plugging this back into the equation~\eqref{eq:multi_minimizer} completes the proof.
\end{proof}

We may now prove \Cref{prop:update}, which forms the basis for line~\ref{line:update} of \Cref{algo:multiica}.
\update*
\begin{proof}
    First, expanding the Frobenius norm term, observe the relationship
    \begin{align}
        F\pow{k, c-1}(\W, \btheta\pow{k}, \U\pow{k}) = \frac{1}{2}\sum_{c=1}^C \W_{c \cdot}^\top \A_c \W_{c \cdot} + \ip{\B, \W} + L(\W) + \const(\W), \label{eq:reduced_objective}
    \end{align}
    for $\A_c := \A_c\pow{k} + \etau^{-1}\mathbf{I}$ and $\B = \lambda\B\pow{k}_c - \etau^{-1} \W\pow{k, c-1}$. Then, evaluating~\eqref{eq:reduced_objective} at $\overline{\W}\pow{k, j-1} := (\e_{1:j-1}, \r, \e_{j+1:C})^\top \W\pow{k, j-1}$, it is clear that
    \begin{align*}
        \frac{1}{2}\sum_{c=1}^C (\overline{\W}_{c \cdot}\pow{k, j-1})^\top \A_c \overline{\W}_{c \cdot}\pow{k, j-1} = \frac{1}{2}\r^\top \W\pow{k, j-1} \A_j\pow{k} (\W\pow{k, j-1})^\top \r + \const(\r)
    \end{align*}
    and that
    \begin{align*}
        -\logdet{\overline{\W}\pow{k, j-1}} = -\log\abs{r_{c}} +  \const(\r).
    \end{align*}
    Similarly, observe that
    \begin{align*}
        \ipsmall{\B, \overline{\W}\pow{k, j-1}}
        &= \tr\p{(\overline{\W}\pow{k, j-1})^\top \B}\\
        &= \tr\p{\B (\overline{\W}\pow{k, j-1})^\top}\\
        &= \tr\p{\B  (\W\pow{k, j-1})^\top (\e_{1:j-1}, \r, \e_{j+1:C}) }\\
        &= (\B(\W\pow{k, j-1})^\top \r)_j + \const(\r)\\
        &= \ipsmall{\W\pow{k, j-1} \B_{j\cdot}, \r} + \const(\r).
    \end{align*}
    Thus, using the definitions of $\mathbf{K}$ and $\mathbf{b}$ in the statement, $\r_j$ is selected by optimizing 
    \begin{align*}
        h(\r) := \frac{1}{2} \r^\top \mathbf{K} \r - \log \abs{r_j} - \ip{\mathbf{b}, \r}
    \end{align*}
    over $\r \in \R^C$. Apply \Cref{lem:update} to complete the proof.    
\end{proof}

\subsection{Monotonicity}

This subsection is dedicated to the proof of \Cref{lem:monotonicity}, which is restated at the end. As a broad outline, the proof proceeds via the sequence of inequalities
\begin{align}
    F_\mu(\W\pow{k-1}, \btheta\pow{k-1}, \blue{\U\pow{k-1}}) &\geq F_\mu(\W\pow{k-1}, \red{\btheta\pow{k-1}}, \blue{\U\pow{k}})\notag\\
    &\geq \underbrace{F_\mu(\green{\W\pow{k-1}}, \red{\btheta\pow{k}}, \U\pow{k})}_{F_\mu(\green{\W\pow{k, 0}}, \btheta\pow{k}, \U\pow{k})} \geq \ldots \geq \underbrace{F_\mu(\green{\W\pow{k}}, \btheta\pow{k}, \U\pow{k})}_{F_\mu(\green{\W\pow{k, C}}, \btheta\pow{k}, \U\pow{k})}, \label{eq:monotonicity}
\end{align}
where the colors indicate what changes at each step and the underbraces use the notation of the cyclic update in \Cref{algo:multiica}. The first inequality is immediate because
\begin{align*}
    F_\mu(\W\pow{k-1}, \btheta\pow{k-1}, \U\pow{k-1}) \geq \min_{\U \geq \zeros_{N \times C \times T}} F_\mu(\W\pow{k-1}, \btheta\pow{k-1}, \U) = F_\mu(\W\pow{k-1}, \btheta\pow{k-1}, \U\pow{k}).
\end{align*}
The remainder of the work will be to apply the smoothness conditions so that the gradient-based updates result in descent for small enough learning rates. 

We may easily see that for fixed $\y = (y_1, \ldots, y_m)$ and $\btheta$ and the function $G$ defined by
\begin{align*}
    G(\W) &= \frac{1}{N} \sum_{i=1}^N \sum_{m=1}^M \ell_m(\W_{m \cdot}^\top \z_i, y_m, \btheta_m)\\
    \grad G(\W) &=
    \frac{1}{N} \sum_{i=1}^N
    \begin{bmatrix}
        \grad_{\s} \ell_1(\W_{1 \cdot}^\top \z_i, y_1, \btheta_1)^\top \\
        \vdots\\
        \grad_{\s} \ell_M(\W_{M \cdot}^\top \z_i, y_M, \btheta_M)^\top \\
        \zeros_{(C - M) \times C}
    \end{bmatrix}\z_i^\top
\end{align*}
it holds via the assumptions of \Cref{lem:monotonicity} and the triangle inequality that
\begin{align}
    \norm{\grad G(\W) - \grad G(\W')}_{\Fro} &\leq  \p{\textstyle\tfrac{1}{N}\sum_{i=1}^N \norm{\z_i}_{2, 2}^2} \sum_{m=1}^M L_m\norm{\W_{m \cdot} - \W'_{m \cdot}}_{2}\notag\\
    &\leq  \p{\textstyle\tfrac{1}{N}\sum_{i=1}^N \norm{\z_i}_{2, 2}^2} \p{\sum_{m=1}^M L_m^2}^{1/2}\p{\sum_{m=1}^M \norm{\W_{m \cdot} - \W'_{m \cdot}}_{2}^2}^{1/2} \notag\\
    &\leq  \underbrace{\p{\textstyle\tfrac{1}{N}\sum_{i=1}^N \norm{\z_i}_{2, 2}^2} \sqrt{\textstyle\sum_{m=1}^M L_m^2}}_{L_{\W}} \norm{\W - \W'}_{\Fro}.\label{eq:smoothness}
\end{align}
Thus, we used the scaled Lipschitz constant $L_{\W}$ in the proof below.

\monotonicity*
\begin{proof}
    The result is shown by proving the inequalities in~\eqref{eq:monotonicity}. Note the equivalence
    \begin{align*}
        F_\mu(\W\pow{k-1}, \red{\btheta\pow{k-1}},\U\pow{k}) \geq F_\mu(\W\pow{k-1}, \red{\btheta\pow{k}}, \U\pow{k}) \iff H(\btheta\pow{k-1}) \geq H(\btheta\pow{k})
    \end{align*}
    for the function $H(\btheta) := \frac{1}{N} \sum_{i=1}^N \sum_{m=1}^M \ell_m(\W\pow{k-1} \z_i, y_m, \btheta_m) + \frac{\mu}{2}\norm{\btheta}_2^2$. By assumption, we have that $\grad H$ is $(L_{\btheta} + \mu)$-Lipschitz continuous. Furthermore, the update rule can be summarized as
    \begin{align*}
        \btheta\pow{k} = \btheta\pow{k-1} - \etap \grad H(\btheta\pow{k-1}),
    \end{align*}
    or in other words, $\grad H(\btheta\pow{k-1}) = \g\pow{k} + \mu \btheta\pow{k-1}$. Apply \citet[Theorem 2.1.5, Eq. (2.1.9)\footnote{Note that the inequality employed does not require convexity of the objective, despite the theorem assumptions.}]{nesterov2018lectures} to achieve
    \begin{align*}
        H(\btheta\pow{k}) &\leq H(\btheta\pow{k-1}) + \ipsmall{\grad H(\btheta\pow{k-1}), \btheta\pow{k} - \btheta\pow{k-1}} + \frac{L_{\btheta} + \mu}{2}\norm{\btheta\pow{k} - \btheta\pow{k-1}}_2^2\\
        &= H(\btheta\pow{k-1}) - \etap\underbrace{\p{1 - \frac{(L_{\btheta} + \mu) \etap}{2}}}_{\geq 1/2} \norm{\grad H(\btheta\pow{k-1})}_2^2 \leq H(\btheta\pow{k-1}),
    \end{align*}
    where the last inequality follows under the condition that $\etap \leq \frac{1}{L_{\btheta} + \mu}$. For the second part of the proof, we show that for $c = 1, \ldots, C$, it holds that
    \begin{align}
         F_\mu(\green{\W\pow{k, c-1}}, \btheta\pow{k}, \U\pow{k}) \geq F_\mu(\green{\W\pow{k, c}}, \btheta\pow{k}, \U\pow{k}),\label{eq:cyclic_ineq}
    \end{align}
    which can be chained to show the desired result. Because $\btheta\pow{k}$ is fixed, we show that $F(\W\pow{k, c}, \btheta\pow{k}, \U\pow{k}) \leq F(\W\pow{k, c-1}, \btheta\pow{k}, \U\pow{k})$. To do so, we use the smoothness constant computation~\eqref{eq:smoothness} and apply \citet[Theorem 2.1.5, Eq. (2.1.9)]{nesterov2018lectures} once again to achieve
    \begin{align*}
        &F(\W\pow{k, c}, \btheta\pow{k}, \U\pow{k}) + \frac{1}{2\etau}\norm{\W\pow{k, c} - \W\pow{k, c-1}}_{\Fro}^2 \\
        &= L(\W\pow{k, c})  + \frac{1}{2}\sum_{j=1}^C (\W_{j\cdot}\pow{k, c})^\top \A_c\pow{k} \W_{j\cdot}\pow{k, c} + \frac{1}{N}\sum_{i, c, t} f(U_{i, c, t}\pow{k}) \\
        &\quad + \frac{\lambda}{N}\sum_{i=1}^N \sum_{m=1}^M  \ell_m((\W_{m \cdot}\pow{k, c})^\top \z_i, y_{i, m}, \btheta_m\pow{k}) + \frac{1}{2\etau}\norm{\W\pow{k, c} - \W\pow{k, c-1}}_{\Fro}^2\\
        &\leq L(\W\pow{k, c})  + \frac{1}{2}\sum_{j=1}^C (\W_{j\cdot}\pow{k, c})^\top \A_c\pow{k} \W_{j\cdot}\pow{k, c} + \frac{1}{N}\sum_{i, c, t} f(U_{i, c, t}\pow{k}) \\
        &\quad + \frac{\lambda}{N}\sum_{i=1}^N \sum_{m=1}^M  \ell_m((\W_{m \cdot}\pow{k, c-1})^\top \z_i, y_{i, m}, \btheta_m\pow{k})\\
        &\quad + \lambda\ipsmall{\B\pow{k}_c, \W\pow{k, c} - \W\pow{k, c-1}} + \p{\frac{\lambda L_{\W}}{2} + \frac{1}{2\etau}}\norm{\W\pow{k, c} - \W\pow{k, c-1}}_{\Fro}^2\\
        &\leq F(\W\pow{k, c-1}, \btheta\pow{k}, \U\pow{k}) + \frac{\lambda L_{\W}}{2}\norm{\W\pow{k, c} - \W\pow{k, c-1}}_{\Fro}^2,
    \end{align*}
    where in the last inequality we used the definition of $\W\pow{k, c}$ as the minimizer of the objective~\eqref{eq:W_update0}, for which $\W\pow{k, c-1}$ is feasible. Thus, the inequality \eqref{eq:cyclic_ineq} is satisfied when $\etau \leq \frac{1}{2\lambda L_{\W}}$, completing the proof of monotonicity. If $F$ is bounded from below, then $F_\mu$ is as well; applying the monotone convergence theorem to the sequence $(F(\W\pow{k}, \btheta\pow{k}, \U\pow{k}))_{k=1}^\infty$ proves the second claim and completes the proof.
\end{proof}

\section{Convergence to a Stationary Point}\label{sec:a:stationarity}

In this section, we analyze a modification to the algorithm (see~\eqref{eq:aux_update2}) and a set of technical tools to establish convergence to a stationary point of the objective. To outline the formal statement of \Cref{prop:stationarity} and proof, we introduce some additional notation and definitions. For any extended real-valued function $\Psi: \R^D \rightarrow \R \cup \br{+\infty}$, finite at $\x \in \R^D$, we define the \emph{Fr\'{e}chet subdifferential} at $\x$ as the set
\begin{align*}
    \partial \Psi(\x) := \br{\x^* \in \R^D: \liminf_{\v \rightarrow \x} \frac{\Psi(\v) - \Psi(\x) - \ip{\x^*, \v - \x}}{\norm{\v - \x}_2} \geq 0},
\end{align*}
which may be empty. An element of this set will be called a \emph{Fr\'{e}chet subgradient}. In our case, we consider $\x = (\W, \btheta, \U)$ and $D = C^2 + d + NCT$ (by flattening the tensors appropriately), and the objective~\eqref{eq:full_objective_frechet}.
We will define \emph{stationarity} of a sequence $(\W\pow{k}, \btheta\pow{k}, \U\pow{k})_{k \geq 0}$ as there existing a sequence of Fr\'{e}chet subgradients $G\pow{k} = (G\pow{k}_{\U}, G\pow{k}_{\btheta}, G\pow{k}_{\W})$ such that $\smallnorm{G\pow{k}}_{\Fro} \rightarrow 0$ as $k \rightarrow \infty$.

Let us outline the proof. First, given the result of \Cref{lem:monotonicity}, it is clear that $F_\mu(\W\pow{k-1}, \btheta\pow{k-1}, \U\pow{k-1}) - F_\mu(\W\pow{k}, \btheta\pow{k}, \U\pow{k}) \rightarrow 0$ as $k \rightarrow \infty$ due to monotonicity and boundedness of the sequence. To achieve a stationarity result, we first prove a descent inequality of the form
\begin{align*}
    F_\mu(\W\pow{k}, \btheta\pow{k}, \U\pow{k}) &\leq F_\mu(\W\pow{k-1}, \btheta\pow{k-1}, \U\pow{k-1})  - \frac{1}{2\etaa} \norm{ \U\pow{k} -  \U\pow{k-1}}^2_{\Fro}\\
    &\quad - \frac{1}{2\etap} \norm{ \btheta\pow{k} -  \btheta\pow{k-1}}^2_{2}- \frac{1}{4 \etau}  \norm{\W\pow{k} - \W\pow{k-1}}_{\Fro}^2,
\end{align*}
which, due to the convergence of the function values, provides the asymptotic result
\begin{align}
    I\pow{k} := \norm{\U\pow{k} -  \U\pow{k-1}}^2_{\Fro} + \norm{\btheta\pow{k} -  \btheta\pow{k-1}}^2_{2} + \norm{\W\pow{k} - \W\pow{k-1}}_{\Fro}^2 \rightarrow 0\label{eq:iter_converge}
\end{align}
as $k \rightarrow \infty$. In the second step, we compute a particular Fr\'{e}chet subgradient $(G\pow{k}_{\U}, G\pow{k}_{\btheta}, G\pow{k}_{\W}) \in \partial \Psi(\U\pow{k}, \btheta\pow{k}, \W\pow{k})$, and show that
\begin{align}
    \norm{(G\pow{k}_{\U}, G\pow{k}_{\btheta}, G\pow{k}_{\W})}_{\Fro}^2 \leq C_0 I\pow{k} \label{eq:subdiff_ineq}
\end{align}
for a constant $C_0 \geq 0$. Given that $I\pow{k} \rightarrow 0$, it holds that this subgradient will also converge to zero, providing convergence of the sequence $(\W\pow{k}, \btheta\pow{k}, \U\pow{k})$ to a stationary point. These two broad steps comprise the next two subsections, which culminates in \Cref{prop:subdiff_bd2}.
The first is used in other analyses of non-convex optimization trajectories \citep{bolte2014proximal}, which is the boundedness of the iterates.
\begin{assumption}\label{asm:bdd1}
    There exists a constant $B > 0$ such that
    \begin{align*}
        \textstyle\max_k \max \br{\smallnorm{\W\pow{k}}_{\Fro}, \smallnorm{\btheta\pow{k}}_2, \smallnorm{\U\pow{k}}_{\Fro}} \leq B.
    \end{align*}
\end{assumption}
The second assumption will be similar, although instead of boundedness of the iterates, we require that the $\W\pow{k}$ trajectory does not become ``too close'' to singularity. In this case, we can place a smoothness condition on the log-determinant function.
\begin{assumption}\label{asm:bdd2}
    For a constant $\sigma > 0$, define the set $\mathcal{W}_\sigma$, which is a subset of the invertible matrices, using the following condition. For any $\W, \W' \in \mathcal{W}_\sigma$ and $c = 1, \ldots, C$,  it holds that
    \begin{align*}
        \norm{\grad_{\W_{c \cdot}} L(\W) - \grad_{\W_{c \cdot}'} L(\W')} \leq \frac{1}{\sigma}\norm{\W - \W'}_{\Fro}. 
    \end{align*}
    Assume that there exists $\sigma > 0$ such that $\mathcal{W}_\sigma$ is non-empty and that $\W\pow{k, c} \in \mathcal{W}_\sigma$ for $k = 1, \ldots, K$ and $c = 1, \ldots, C$.
\end{assumption}
Another interpretation is that we confine the matrices to a sub-level set of the negative log-determinant function. Here,  $\sigma$ can also be interpreted as (a reparametrization of) a minimum singular value constant. The final assumption is simply a smoothness assumption on the function $f$, which first appears in the variational form~\eqref{eq:supergaussian} and consequently the full objective~\eqref{eq:full_objective}.
\begin{assumption}\label{asm:bdd3}
    Assume that $f$ is differentiable, and for that any $u, v \geq 0$, it holds that
    \begin{align*}
        \abs{f'(u) - f'(v)} \leq L_{\U} \abs{u - v}.
    \end{align*}
\end{assumption}

\subsection{Descent Inequalities}

Here, we reuse the same proof of \Cref{lem:monotonicity}, but retain the additional non-positive terms that were dropped as slack for cancellation. While the analysis for $\U\pow{k}$ and $\btheta\pow{k}$ are nearly identical to the previous results, we must chain the inequalities for the inner loop of the $\W\pow{k}$ update so that they sum to the desired result.
\begin{lemma}\label{lem:descent_stat}
    Under the conditions of \Cref{lem:monotonicity}, the folloing three inequalities hold:
    \begin{align}
        F_\mu(\W\pow{k-1}, \btheta\pow{k-1}, \U\pow{k}) &\leq F_\mu(\W\pow{k-1}, \btheta\pow{k-1}, \U\pow{k-1}) - \frac{1}{2\etaa} \norm{ \U\pow{k} -  \U\pow{k-1}}^2_{\Fro},\\
        F_\mu(\W\pow{k-1}, \btheta\pow{k}, \U\pow{k}) &\leq F_\mu(\W\pow{k-1}, \btheta\pow{k-1}, \U\pow{k}) - \frac{1}{2\etap} \norm{ \btheta\pow{k} -  \btheta\pow{k-1}}^2_{2},\\
        F_\mu(\W\pow{k}, \btheta\pow{k}, \U\pow{k}) &\leq F_\mu(\W\pow{k-1}, \btheta\pow{k}, \U\pow{k}) - \frac{1}{4 \etau}  \norm{\W\pow{k} - \W\pow{k-1}}_{\Fro}^2.
    \end{align}
\end{lemma}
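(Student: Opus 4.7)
The plan is to re-run the proof of \Cref{lem:monotonicity} while retaining the non-positive quadratic remainder terms that were previously discarded as slack. The three inequalities correspond to the three block updates; they are proven independently and then chained in the next subsection.

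For the first inequality, I would invoke the optimality defining the proximal step~\eqref{eq:aux_update2}: the iterate $\U\pow{k}$ is the minimizer of $\U \mapsto F(\W\pow{k-1}, \btheta\pow{k-1}, \U) + \iota_{+}(\U) + \tfrac{1}{2\etaa}\smallnorm{\U - \U\pow{k-1}}_{\Fro}^2$, and by an inductive feasibility argument $\U\pow{k-1} \geq \zeros_{N\times C\times T}$, so $\U\pow{k-1}$ is a competitor. Plugging it in and rearranging yields exactly the claimed $-\tfrac{1}{2\etaa}\smallnorm{\U\pow{k}-\U\pow{k-1}}_{\Fro}^2$ term; adding $\tfrac{\mu}{2}\smallnorm{\btheta\pow{k-1}}_2^2$ to both sides converts $F$ to $F_\mu$. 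For the second inequality, I would reuse the smooth descent argument from the proof of \Cref{lem:monotonicity} verbatim: applying Nesterov's descent lemma to the $(L_\btheta + \mu)$-smooth function $H(\btheta) := \tfrac{1}{N}\sum_i \sum_m \ell_m((\W\pow{k-1}_{m\cdot})^\top \z_i, y_{i,m}, \btheta_m) + \tfrac{\mu}{2}\smallnorm{\btheta}_2^2$ produces the bound $H(\btheta\pow{k}) \leq H(\btheta\pow{k-1}) - \etap(1 - \tfrac{(L_\btheta+\mu)\etap}{2})\smallnorm{\grad H(\btheta\pow{k-1})}_2^2$, after which the condition $\etap \leq 1/(L_\btheta + \mu)$ gives a factor of at least $\tfrac{1}{2}$, and the gradient-descent identity $\btheta\pow{k} - \btheta\pow{k-1} = -\etap\grad H(\btheta\pow{k-1})$ rewrites $\etap \cdot \tfrac{1}{2}\smallnorm{\grad H}_2^2$ as $\tfrac{1}{2\etap}\smallnorm{\btheta\pow{k}-\btheta\pow{k-1}}_2^2$.

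For the third inequality, I would track the inner cyclic loop over rows. The smoothness constant $L_{\W}$ in~\eqref{eq:smoothness} and the fact that $\W\pow{k,c}$ is the exact minimizer of the per-iteration objective~\eqref{eq:full_objective3} (with $\W\pow{k,c-1}$ as feasible competitor) yield, exactly as in the proof of \Cref{lem:monotonicity},
\begin{align*}
    F(\W\pow{k,c}, \btheta\pow{k}, \U\pow{k}) \leq F(\W\pow{k,c-1}, \btheta\pow{k}, \U\pow{k}) - \p{\tfrac{1}{2\etau} - \tfrac{\lambda L_{\W}}{2}}\smallnorm{\W\pow{k,c} - \W\pow{k,c-1}}_{\Fro}^2.
\end{align*}
The hypothesis $\etau \leq 1/(2\lambda L_{\W})$ from \Cref{lem:monotonicity} gives $\tfrac{\lambda L_{\W}}{2} \leq \tfrac{1}{4\etau}$, so the coefficient is at least $\tfrac{1}{4\etau}$. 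The key observation is that $\W\pow{k,c}$ and $\W\pow{k,c-1}$ differ only in row $c$, so $\smallnorm{\W\pow{k,c} - \W\pow{k,c-1}}_{\Fro}^2 = \smallnorm{\W\pow{k}_{c\cdot} - \W\pow{k-1}_{c\cdot}}_2^2$. Summing from $c=1$ to $C$ telescopes the function values on the left to $F(\W\pow{k}, \btheta\pow{k}, \U\pow{k}) - F(\W\pow{k-1}, \btheta\pow{k}, \U\pow{k})$, while the row-wise squared gaps assemble exactly into $\smallnorm{\W\pow{k} - \W\pow{k-1}}_{\Fro}^2$ on the right; once more, adding $\tfrac{\mu}{2}\smallnorm{\btheta\pow{k}}_2^2$ to both sides restores $F_\mu$.

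The mechanically routine part is the smooth descent applied to $\btheta$ and the prox inequality applied to $\U$; the only subtle step is the telescoping of the row-wise cyclic update, where one must recognize that the per-row Frobenius gaps sum exactly to the full Frobenius gap (as opposed to introducing a constant loss from suboptimally chaining the inner iterates) and that the $\tfrac{1}{4\etau}$ constant emerges precisely from invoking the step-size condition $\etau \leq 1/(2\lambda L_{\W})$.
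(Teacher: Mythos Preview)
Your proposal is correct and follows essentially the same approach as the paper's own proof: the proximal optimality of~\eqref{eq:aux_update2} for the $\U$-step, the Nesterov descent lemma plus the gradient identity $\btheta\pow{k}-\btheta\pow{k-1}=-\etap\grad H(\btheta\pow{k-1})$ for the $\btheta$-step, and the per-row majorization inequality telescoped via $\smallnorm{\W\pow{k,c}-\W\pow{k,c-1}}_{\Fro}^2=\smallnorm{\W\pow{k}_{c\cdot}-\W\pow{k-1}_{c\cdot}}_2^2$ for the $\W$-step. The constants and the use of the step-size hypotheses match exactly.
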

\begin{proof}
    The first inequality follows immediately from~\eqref{eq:aux_update2}. By repeating the argument in the proof of \Cref{lem:monotonicity}, we may achieve the two inequalities
    \begin{align*}
        F_\mu(\W\pow{k-1}, \btheta\pow{k}, \U\pow{k}) &\leq F_\mu(\W\pow{k-1}, \btheta\pow{k-1}, \U\pow{k}) - \frac{1}{\etap}\p{1 - \frac{(L_{\btheta} + \mu) \etap}{2}} \norm{\btheta\pow{k} -  \btheta\pow{k-1}}^2_{\Fro}\\
        F(\W\pow{k, c}, \btheta\pow{k}, \U\pow{k}) &\leq F(\W\pow{k, c-1}, \btheta\pow{k}, \U\pow{k}) + \frac{1}{2}\p{\lambda L_{\W} - \frac{1}{\etau}}\norm{\W\pow{k, c} - \W\pow{k, c-1}}_{\Fro}^2
    \end{align*}
    for every step of the inner loop. Using $\etap \leq \frac{1}{L_{\btheta} + \mu}$ above grants the second inequality in the statement of the result. For the third inequality, using that $\etau \leq \frac{1}{2\lambda L_{\W}}$, we have that
    \begin{align*}
        F(\W\pow{k, c}, \btheta\pow{k}, \U\pow{k}) &\leq F(\W\pow{k, c-1}, \btheta\pow{k}, \U\pow{k}) - \frac{1}{4 \etau}\norm{\W\pow{k, c} - \W\pow{k, c-1}}_{\Fro}^2\\
        &= F(\W\pow{k, c-1}, \btheta\pow{k}, \U\pow{k}) - \frac{1}{4 \etau}\norm{\W\pow{k, c}_{c \cdot} - \W\pow{k, c-1}_{c \cdot}}_{2}^2\\
        &= F(\W\pow{k, c-1}, \btheta\pow{k}, \U\pow{k}) - \frac{1}{4 \etau}\norm{\W\pow{k}_{c \cdot} - \W\pow{k-1}_{c \cdot}}_{2}^2,
    \end{align*}
    where the equalities follow from the fact that $\W\pow{k, c}_{j \cdot} = \W\pow{k, c-1}_{j \cdot}$ for $j \neq c$ due to the row constraints and that $\W\pow{k, c}_{c \cdot} = \W\pow{k}_{c \cdot}$ and $\W\pow{k, c-1}_{ c \cdot} = \W\pow{k-1}_{c \cdot}$ due to the order of the updates. By chaining this inequality for $c = C, \ldots, 1$, we have that
    \begin{align*}
        F_\mu(\W\pow{k}, \btheta\pow{k}, \U\pow{k}) &= F_\mu(\W\pow{k, C}, \btheta\pow{k}, \U\pow{k})\\
        &\leq F_\mu(\W\pow{k, 0}, \btheta\pow{k}, \U\pow{k}) - \frac{1}{4 \etau} \sum_{c=1}^C \norm{\W\pow{k}_{c \cdot} - \W\pow{k-1}_{c \cdot}}_{2}^2\\
        &= F_\mu(\W\pow{k -1}, \btheta\pow{k}, \U\pow{k}) -\frac{1}{4 \etau} \norm{\W\pow{k} - \W\pow{k-1}}_{\Fro}^2.
    \end{align*}
\end{proof}
By applying the three inequalities from \Cref{lem:descent_stat}, we achieve the descent guarantee described as the first step in the proof outline. The convergence~\eqref{eq:iter_converge} follows from the rearrangement
\begin{align*}
     \frac{1}{2\max\br{\etaa, \etap, \etau / 2}} I\pow{k} &\leq \frac{1}{2\etaa} \norm{ \U\pow{k} -  \U\pow{k-1}}^2_{\Fro} +  \frac{1}{2\etap} \norm{ \btheta\pow{k} -  \btheta\pow{k-1}}^2_{2} +  \frac{1}{4 \etau}  \norm{\W\pow{k} - \W\pow{k-1}}_{\Fro}^2\\
     &\leq F_\mu(\W\pow{k-1}, \btheta\pow{k-1}, \U\pow{k-1}) - F_\mu(\W\pow{k}, \btheta\pow{k}, \U\pow{k}),
\end{align*}
where the right-hand side converges to zero. 

\subsection{Subdifferential Inequalities}

We now establish~\eqref{eq:subdiff_ineq}, in two parts.
We introduce some additional notation in this section. Decompose the objective further via
\begin{align*}
    \Psi(\U, \btheta, \W) &= F_\mu(\W, \btheta, \U) + \iota_+(\U) = \Lunsup(\W, \U) + \Lsup(\W, \btheta) + \iota_+(\U),
\end{align*}
for
\begin{align*}
    \Lunsup(\W, \U) &= L(\W) + \frac{1}{N}\sum_{i=1}^N \sum_{c=1}^C \sum_{t=1}^T \sbr{\frac{1}{2} U_{i, c, t} [\W \z_i]_{c, t}^2 + f(U_{i, c, t})} \\
    \Lsup(\W, \btheta) &= \frac{\lambda}{N}\sum_{i=1}^N \sum_{m=1}^M  \ell_m(\W_{m \cdot}^\top \z_i, y_{i, m}, \btheta_m) + \frac{\mu}{2}\norm{\btheta}_2^2.
\end{align*}
For the remaining results, we first provide a formula for a Fr\'{e}chet subgradient. We then prove the inequality~\eqref{eq:subdiff_ineq} to achieve the desired result. In the first part, we identify the Fr\'{e}chet subgradient that can be used in~\eqref{eq:subdiff_ineq}.
\begin{lemma}\label{lem:subdiff_bd1}
    Define
    \begin{align*}
        G\pow{k}_{\U} &= \frac{1}{\etaa} \p{\U\pow{k-1} - \U\pow{k}} + \grad_{\U} F_\mu(\W\pow{k}, \btheta\pow{k}, \U\pow{k}) - \grad_{\U} F_\mu(\W\pow{k-1}, \btheta\pow{k-1}, \U\pow{k}) \\
        G\pow{k}_{\btheta} &= \frac{1}{\etap}\p{\btheta\pow{k-1} - \btheta\pow{k}} + \grad_{\btheta} F_\mu(\W\pow{k}, \btheta\pow{k}, \U\pow{k}) - \grad_{\btheta} F_\mu(\W\pow{k-1}, \btheta\pow{k-1}, \U\pow{k})\\
        G\pow{k}_{\W} &= (G\pow{k}_{\W_{1 \cdot}}, \ldots, G\pow{k}_{\W_{C \cdot}})^\top
    \end{align*}
    for
    \begin{align*}
        G\pow{k}_{\W_{c \cdot}} &= \grad_{\W_{c \cdot}} L(\W\pow{k}) - \grad_{\W_{c \cdot}} L(\W\pow{k, c}) \\
        &\quad + \frac{\lambda}{N} \sum_{i=1}^N \grad_{\W_{c \cdot}}  \sbr{\ell_c((\W_{c \cdot}\pow{k})^\top \z_i, y_{i, c}, \btheta_c\pow{k}) - \ell_c((\W_{c \cdot}\pow{k-1})^\top \z_i, y_{i, c}, \btheta_c\pow{k})}\\
        &\quad + \frac{1}{2\etau} (\W\pow{k-1}_{c \cdot} - \W\pow{k}_{c \cdot}),
    \end{align*}
    where the second line is interpreted as zero when $c > M$.
    Then, 
    \begin{align*}
        \p{G\pow{k}_{\U}, G\pow{k}_{\btheta}, G\pow{k}_{\W}} \in \partial \Psi(\U\pow{k}, \btheta\pow{k}, \W\pow{k}).
    \end{align*}
\end{lemma}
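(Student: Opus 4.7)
The plan is to derive $G^{(k)} = (G^{(k)}_{\U}, G^{(k)}_{\btheta}, G^{(k)}_{\W})$ by reading off the first-order optimality condition of each of the three block updates performed at iteration $k$, translating each into a gradient identity at the new iterate $(\W^{(k)}, \btheta^{(k)}, \U^{(k)})$, and then invoking the sum rule for the Fr\'{e}chet subdifferential. Since $F_\mu$ is $C^1$ jointly in all three arguments and the indicator $\iota_+$ depends only on $\U$, we have the separable decomposition
\begin{equation*}
\partial \Psi(\U, \btheta, \W) = \bigl(\grad_\U F_\mu(\U, \btheta, \W) + \partial \iota_+(\U)\bigr) \times \{\grad_\btheta F_\mu(\U, \btheta, \W)\} \times \{\grad_\W F_\mu(\U, \btheta, \W)\}.
\end{equation*}
So it suffices to exhibit an element of $\partial \iota_+(\U^{(k)})$ whose sum with $\grad_\U F_\mu$ at the new iterate equals $G^{(k)}_{\U}$, and to verify that $G^{(k)}_{\btheta}$ and $G^{(k)}_{\W}$ coincide with the corresponding gradients of $F_\mu$ evaluated at $(\W^{(k)}, \btheta^{(k)}, \U^{(k)})$.

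For the $\U$ block I will apply the optimality condition of the proximal step~\eqref{eq:aux_update2}: the subdifferential inclusion $0 \in \grad_{\U} F_\mu(\W^{(k-1)}, \btheta^{(k-1)}, \U^{(k)}) + \tfrac{1}{\etaa}(\U^{(k)} - \U^{(k-1)}) + \partial \iota_+(\U^{(k)})$ produces an explicit element of $\partial \iota_+(\U^{(k)})$. Adding $\grad_{\U} F_\mu(\W^{(k)}, \btheta^{(k)}, \U^{(k)})$ to this element telescopes the two $\grad_\U F_\mu$ terms into the ``gradient discrepancy'' that appears in the stated formula for $G^{(k)}_{\U}$. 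For the $\btheta$ block, the gradient-descent step $\btheta^{(k)} = \btheta^{(k-1)} - \etap \grad_{\btheta} F_\mu(\W^{(k-1)}, \btheta^{(k-1)}, \U^{(k)})$ rearranges to $\tfrac{1}{\etap}(\btheta^{(k-1)} - \btheta^{(k)}) = \grad_{\btheta} F_\mu(\W^{(k-1)}, \btheta^{(k-1)}, \U^{(k)})$; subtracting and adding $\grad_{\btheta} F_\mu(\W^{(k)}, \btheta^{(k)}, \U^{(k)})$ yields exactly the formula for $G^{(k)}_{\btheta}$.

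The main technical step is the $\W$ block, because of the cyclic row-by-row updates and the log-determinant term. At each inner step $c$, only the $c$-th row of $\W$ is optimized, and by construction $\W^{(k, c)}$ minimizes the per-iteration objective~\eqref{eq:full_objective3}. Differentiating that objective with respect to $\W_{c\cdot}$ and setting the gradient to zero produces an optimality condition involving $\grad_{\W_{c\cdot}} L(\W^{(k, c)})$, $\A^{(k)}_c \W^{(k, c)}_{c\cdot}$, $\lambda(\B^{(k)}_c)_{c\cdot}$, and the prox term $\tfrac{1}{\etau}(\W^{(k, c)}_{c\cdot} - \W^{(k, c-1)}_{c\cdot})$. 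Two bookkeeping identities then drive the proof: $\W^{(k, c)}_{c\cdot} = \W^{(k)}_{c\cdot}$ and $\W^{(k, c-1)}_{c\cdot} = \W^{(k-1)}_{c\cdot}$, the latter being the same observation the paper uses to justify reusing $\B_1^{(k)}$ across $c$. Consequently $\lambda(\B^{(k)}_c)_{c\cdot}$ is precisely $\tfrac{\lambda}{N}\sum_i \grad_{\W_{c\cdot}} \ell_c$ evaluated at $\W^{(k-1)}_{c\cdot}$. Comparing the optimality condition to the true partial gradient of $F_\mu$ at $(\W^{(k)}, \btheta^{(k)}, \U^{(k)})$ shows that they differ by exactly the log-determinant discrepancy $\grad_{\W_{c\cdot}} L(\W^{(k)}) - \grad_{\W_{c\cdot}} L(\W^{(k, c)})$, the supervised-gradient discrepancy between $\W^{(k)}_{c\cdot}$ and $\W^{(k-1)}_{c\cdot}$, and a prox-scale term in $\W^{(k-1)}_{c\cdot} - \W^{(k)}_{c\cdot}$, reproducing the displayed formula for $G^{(k)}_{\W_{c\cdot}}$ (for $c > M$ the supervised contribution is vacuously zero).

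Assembling the three identities and applying the separable sum rule above yields $(G^{(k)}_{\U}, G^{(k)}_{\btheta}, G^{(k)}_{\W}) \in \partial \Psi(\U^{(k)}, \btheta^{(k)}, \W^{(k)})$. The main obstacle is conceptual rather than computational: carefully tracking which intermediate matrix $\W^{(k, c)}$ the per-row optimality is stated at, and recognizing that the $c$-th row of $\B^{(k)}_c$ is frozen at $\W^{(k-1)}_{c\cdot}$, so that the ``error'' between the stated subgradient and the true gradient is precisely the log-determinant and supervised-gradient discrepancies inherited from the cyclic scheme. These same discrepancies will later be controlled by the Lipschitz constants furnished by \Cref{asm:smoothness}, \Cref{asm:bdd2}, and \Cref{asm:bdd3} to derive the bound~\eqref{eq:subdiff_ineq}.
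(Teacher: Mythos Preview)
Your proposal is correct and follows essentially the same route as the paper's proof: you invoke the separable subdifferential rule $\partial\Psi = (\grad_{\U}F_\mu + \partial\iota_+) \times \{\grad_{\btheta}F_\mu\} \times \{\grad_{\W}F_\mu\}$, read off the optimality condition of each block update (proximal for $\U$, gradient step for $\btheta$, row-wise minimization for $\W$), and use the same bookkeeping identities $\W^{(k,c)}_{c\cdot}=\W^{(k)}_{c\cdot}$ and $\W^{(k,c-1)}_{c\cdot}=\W^{(k-1)}_{c\cdot}$ to simplify the $\W$ component. The only cosmetic difference is that the paper phrases the $\W$-row optimality through the split $\Lunsup/\Lsup$ rather than directly via~\eqref{eq:full_objective3}, and carries the prox coefficient as $\tfrac{1}{2\etau}$ (matching the lemma statement) rather than the $\tfrac{1}{\etau}$ you wrote; this does not affect the argument.
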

\begin{proof}
    By~\eqref{eq:aux_update2}, it holds by the optimality of $\U\pow{k}$ that
    \begin{align*}
        \grad_{\U} F_\mu(\W\pow{k-1}, \btheta\pow{k-1}, \U\pow{k}) + \frac{1}{\etaa} \p{\U\pow{k}- \U\pow{k-1}} + \mathbf{S}\pow{k} = \zeros_{N \times C \times T}
    \end{align*}
    for some $\mathbf{S}\pow{k} \in \partial \iota_+(\U\pow{k})$. It also holds that
    \begin{align*}
        \grad_{\U} F_\mu(\W\pow{k}, \btheta\pow{k}, \U\pow{k}) + \mathbf{S}\pow{k} \in \partial_{\U} \Psi(\U\pow{k}, \btheta\pow{k}, \W\pow{k}),
    \end{align*}
    which, combined with the above, yields
    \begin{align*}
        G\pow{k}_{\U} &= \frac{1}{\etaa} \p{\U\pow{k-1} - \U\pow{k}} + \grad_{\U} F_\mu(\W\pow{k}, \btheta\pow{k}, \U\pow{k}) - \grad_{\U} F_\mu(\W\pow{k-1}, \btheta\pow{k-1}, \U\pow{k})  \\
        &\in  \partial_{\U} \Psi(\U\pow{k}, \btheta\pow{k}, \W\pow{k}).
    \end{align*}
    By the definition of $\btheta\pow{k}$, it holds that
    \begin{align*}
        \frac{1}{\etap}\p{\btheta\pow{k} - \btheta\pow{k-1}} - \grad_{\btheta} F_\mu(\W\pow{k-1}, \btheta\pow{k-1}, \U\pow{k}) = \zeros_{d}.
    \end{align*}
    It holds trivially that $\grad_{\btheta} F_\mu(\W\pow{k}, \btheta\pow{k}, \U\pow{k}) + \zeros_{d}\in \partial_{\btheta} \Psi(\U\pow{k}, \btheta\pow{k}, \W\pow{k})$ and substituting the expression for $\zeros_{d}$ then proves that $G\pow{k}_{\btheta} \in \partial_{\btheta} \Psi(\U\pow{k}, \btheta\pow{k}, \W\pow{k})$.
    Finally, for any vector $\w \in \R^C$, we use $\W^{k, c-1}(\w)$ to indicate the matrix $\W^{k, c-1}$ with its $c$-th row replaced by $\w$.
    Due to the definition
    \begin{align*}
        \W\pow{k}_{c \cdot } = \argmin_{\w \in \R^C} \Lunsup(\W\pow{k, c-1}(\w), \U\pow{k}) + \ipsmall{\grad_{\W_{c \cdot}} \Lsup(\W\pow{k, c-1}, \btheta\pow{k}), \w} + \frac{1}{2\etau} \norm{\w - \W\pow{k, c-1}_{c \cdot}}_{2}^2,
    \end{align*}
    we have the identity
    \begin{align*}
        \zeros_C &= \grad_{\W_{c \cdot}} \Lunsup(\W\pow{k, c}, \U\pow{k}) + \grad_{\W_{c \cdot}} \Lsup(\W\pow{k, c-1}, \btheta\pow{k}) + \frac{1}{2\etau} (\W\pow{k, c}_{c \cdot} - \W\pow{k, c-1}_{c \cdot})\\
        &= \grad_{\W_{c \cdot}} \Lunsup(\W\pow{k, c}, \U\pow{k}) + \grad_{\W_{c \cdot}} \Lsup(\W\pow{k, c-1}, \btheta\pow{k}) + \frac{1}{2\etau} (\W\pow{k}_{c \cdot} - \W\pow{k-1}_{c \cdot}).
    \end{align*}
    Then, because
    \begin{align*}
        \grad_{\W_{c \cdot}} \Psi(\U\pow{k}, \btheta\pow{k}, \W\pow{k}) = \grad_{\W_{c \cdot}} \Lunsup(\W\pow{k}, \U\pow{k}) + \grad_{\W_{c \cdot}} \Lsup(\W\pow{k}, \btheta\pow{k}),
    \end{align*}
    we have that
    \begin{align*}
        G\pow{k}_{\W_{c \cdot}} &= \grad_{\W_{c \cdot}} \Lunsup(\W\pow{k}, \U\pow{k}) - \grad_{\W_{c \cdot}} \Lunsup(\W\pow{k, c}, \U\pow{k}) \\
        &\quad + \grad_{\W_{c \cdot}} \Lsup(\W\pow{k}, \btheta\pow{k}) - \grad_{\W_{c \cdot}} \Lsup(\W\pow{k, c-1}, \btheta\pow{k})\\
        &\quad + \frac{1}{2\etau} (\W\pow{k-1}_{c \cdot} - \W\pow{k}_{c \cdot})
    \end{align*}
    is equal to $\grad_{\W_{c \cdot}} \Psi(\U\pow{k}, \btheta\pow{k}, \W\pow{k})$. To simplify the expression for the unsupervised loss, note that because $\W\pow{k}_{c \cdot} = \W\pow{k, c}_{c \cdot}$, it holds that
    \begin{align*}
        \grad_{\W_{c \cdot}} \Lunsup(\W\pow{k}, \U\pow{k}) - \grad_{\W_{c \cdot}} \Lunsup(\W\pow{k, c}, \U\pow{k}) = \grad_{\W_{c \cdot}} L(\W\pow{k}) - \grad_{\W_{c \cdot}} L(\W\pow{k, c}).
    \end{align*}
    For the supervised loss, defining the expression below as zero for $c > M$, because $\W\pow{k, c-1}_{c \cdot} = \W\pow{k-1}_{c \cdot}$, we have that
    \begin{align*}
        &\grad_{\W_{c \cdot}} \Lsup(\W\pow{k}, \btheta\pow{k}) - \grad_{\W_{c \cdot}} \Lsup(\W\pow{k, c-1}, \btheta\pow{k}) \\
        &= \frac{\lambda}{N} \sum_{i=1}^N \grad_{\W_{c \cdot}}  \sbr{\ell_c((\W_{c \cdot}\pow{k})^\top \z_i, y_{i, c}, \btheta_c\pow{k}) - \ell_c((\W_{c \cdot}\pow{k-1})^\top \z_i, y_{i, c}, \btheta_c\pow{k})}.
    \end{align*}
    By combining all three steps with the subdifferential calculus rule
    \begin{align*}
        \partial \Psi(\U\pow{k}, \btheta\pow{k}, \W\pow{k}) = \partial_{\U} \Psi(\U\pow{k}, \btheta\pow{k}, \W\pow{k}) \times \partial_{\btheta} \Psi(\U\pow{k}, \btheta\pow{k}, \W\pow{k})  \times \partial_{\W} \Psi(\U\pow{k}, \btheta\pow{k}, \W\pow{k}),
    \end{align*}
    we complete the proof.
\end{proof}

To complete the analysis, we upper bound the norm of the subgradient
\begin{align}
    \norm{\p{G\pow{k}_{\U}, G\pow{k}_{\btheta}, G\pow{k}_{\W}}}_{\Fro} \leq \norm{G\pow{k}_{\U}}_{\Fro} + \norm{G\pow{k}_{\btheta}}_2 + \norm{G\pow{k}_{\W}}_{\Fro}.\label{eq:subdiff_bound_triangle}
\end{align}
This is where the key assumptions are used.
\begin{proposition}\label{prop:subdiff_bd2}
    Assume the conditions of \Cref{lem:monotonicity} and additionally, \Cref{asm:bdd1,asm:bdd2,asm:bdd3}. Let $L = (L_{\U}, L_{\btheta}, L_{\W})$, $\eta = ( \etaa, \etap, \etau)$, and $\Z = (\z_1, \ldots, \z_N)$.
    Then, there exists a constant 
    \begin{align*}
        C_0 \equiv C_0(L, \eta, \Z, \mu, \sigma)
    \end{align*}
    such that for $k = 1, \ldots, K$, it holds that
    \begin{align*}
        \norm{\p{G\pow{k}_{\U}, G\pow{k}_{\btheta}, G\pow{k}_{\W}}}_{\Fro}^2 \leq C_0 I\pow{k}.
    \end{align*}
\end{proposition}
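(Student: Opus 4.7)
The plan is to bound $\norm{(G\pow{k}_{\U}, G\pow{k}_{\btheta}, G\pow{k}_{\W})}_{\Fro}^2$ piece by piece, using the triangle inequality decomposition~\eqref{eq:subdiff_bound_triangle}, and then collecting constants. Each piece should reduce to either an iterate gap (via the proximity/gradient-step terms) or a difference of gradients that can be controlled by one of the assumed Lipschitz moduli applied to an iterate gap. The final step is a generic $(a+b+c)^2 \leq 3(a^2+b^2+c^2)$ combination to move from an $\ell^1$ to an $\ell^2$ form.

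For $G\pow{k}_{\U}$, the crucial simplification is that both gradient evaluations are taken at $\U\pow{k}$, so the terms involving $f'(\U\pow{k})$ cancel and Assumption~\ref{asm:bdd3} is not needed here. Only the quadratic difference $\tfrac{1}{2N}([\W\pow{k}\z_i]_{c,t}^2 - [\W\pow{k-1}\z_i]_{c,t}^2)$ remains. I would factor this as $(a-b)(a+b)$, apply Cauchy--Schwarz to bound $|a-b| \leq \norm{\W\pow{k}_{c\cdot} - \W\pow{k-1}_{c\cdot}}_2 \norm{\z_{i,t}}_2$, and use the boundedness from Assumption~\ref{asm:bdd1} to control $|a+b| \leq 2B\norm{\z_{i,t}}_2$. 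Summing over $(i,c,t)$ and combining with the proximity term $\etaa^{-1}(\U\pow{k-1} - \U\pow{k})$ yields a bound of the form $C_1(\norm{\U\pow{k} - \U\pow{k-1}}_{\Fro}^2 + \norm{\W\pow{k} - \W\pow{k-1}}_{\Fro}^2)$. The $G\pow{k}_{\btheta}$ case is similar but easier: the $\U\pow{k}$ dependence drops out and I interpose $(\W\pow{k-1}, \btheta\pow{k})$ to split the gradient difference into two pieces, one controlled by the $\bar L$-Lipschitz property of $\s \mapsto \grad_{\btheta} \ell_m$ (together with $\norm{\z_i}_{2,2}$) and another by the $L_{\btheta}$-Lipschitz property of $\btheta_m \mapsto \grad_{\btheta}\ell_m$.

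The $G\pow{k}_{\W}$ case is the most delicate. I would bound row-by-row by the three ingredients of $G\pow{k}_{\W_{c \cdot}}$ and then use $\norm{G\pow{k}_{\W}}_{\Fro}^2 = \sum_c \norm{G\pow{k}_{\W_{c \cdot}}}_2^2$. For the log-determinant piece, Assumption~\ref{asm:bdd2} gives $\norm{\grad_{\W_{c\cdot}} L(\W\pow{k}) - \grad_{\W_{c\cdot}} L(\W\pow{k, c})}_2 \leq \sigma^{-1}\norm{\W\pow{k} - \W\pow{k, c}}_{\Fro}$; the key observation is that by the cyclic update structure $\W\pow{k, c}_{j \cdot} = \W\pow{k}_{j \cdot}$ for $j \leq c$ and $\W\pow{k, c}_{j \cdot} = \W\pow{k-1}_{j \cdot}$ for $j > c$, so that $\norm{\W\pow{k} - \W\pow{k, c}}_{\Fro} \leq \norm{\W\pow{k} - \W\pow{k-1}}_{\Fro}$. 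The supervised piece uses the $L_c$-Lipschitz property of $\s \mapsto \grad_{\s}\ell_c$ from Assumption~\ref{asm:smoothness}, producing a factor of $\norm{\W\pow{k}_{c\cdot} - \W\pow{k-1}_{c \cdot}}_2$ after invoking the data norm $\norm{\z_i}_{2,2}$. The proximity piece is already of the required form.

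The main obstacle is bookkeeping: each constituent bound must be re-expressed in terms of one of the three quantities summed in $I\pow{k}$, and the log-determinant piece in particular requires the cyclic identity above rather than a naive triangle inequality. Once every piece is cast in this form, squaring the triangle inequality and absorbing all constants yields $\norm{(G\pow{k}_{\U}, G\pow{k}_{\btheta}, G\pow{k}_{\W})}_{\Fro}^2 \leq C_0 I\pow{k}$ for an explicit $C_0$ depending only on $L$, $\eta$, $\Z$, $\mu$, $\sigma$, and the boundedness constant $B$ from Assumption~\ref{asm:bdd1}.
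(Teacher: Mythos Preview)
Your proposal is correct and follows essentially the same three-block strategy as the paper: apply the triangle inequality~\eqref{eq:subdiff_bound_triangle}, reduce each of $G\pow{k}_{\U}$, $G\pow{k}_{\btheta}$, $G\pow{k}_{\W}$ to iterate gaps via the relevant Lipschitz moduli, and finish with $(a+b+c)^2 \leq 3(a^2+b^2+c^2)$. Two minor deviations are worth noting: (i) for the $\U$-block you correctly observe that the $f'$ terms cancel because both gradients are evaluated at $\U\pow{k}$, whereas the paper keeps an (unnecessary) $L_{\U}\smallnorm{\U\pow{k}-\U\pow{k-1}}_{\Fro}$ contribution; (ii) for the log-determinant piece you apply \Cref{asm:bdd2} directly to the pair $(\W\pow{k}, \W\pow{k,c})$ and use the cyclic identity $\smallnorm{\W\pow{k}-\W\pow{k,c}}_{\Fro} \leq \smallnorm{\W\pow{k}-\W\pow{k-1}}_{\Fro}$, while the paper telescopes through $\W\pow{k,c}, \W\pow{k,c+1}, \ldots, \W\pow{k,C}$ before summing---your route is slightly cleaner and yields a better constant.
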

\begin{proof}
    Starting from~\eqref{eq:subdiff_bound_triangle}, we bound each norm using the formulas from \Cref{lem:subdiff_bd1}. In the following, we use $C_i \equiv C_i(L, \eta, \Z, \mu, \sigma) \geq 0$ as a constant. Recall the function $f$ from~\eqref{eq:supergaussian}.
    Under \Cref{asm:bdd1} and \Cref{asm:bdd3}, it holds that
    \begin{align*}
        \norm{G\pow{k}_{\U}}_{\Fro} &\leq \frac{1}{\etaa} \norm{\U\pow{k-1} - \U\pow{k}}_{\Fro} + \norm{\grad_{\U} F_\mu(\W\pow{k}, \btheta\pow{k}, \U\pow{k}) - \grad_{\U} F_\mu(\W\pow{k-1}, \btheta\pow{k-1}, \U\pow{k})}_{\Fro}\\
        &\leq  \frac{1}{\etaa} \norm{\U\pow{k-1} - \U\pow{k}}_{\Fro} + \frac{1}{N}\p{\sum_{i, c, t}\p{ f'(U\pow{k}_{i, c, t}) - f'(U\pow{k-1}_{i, c, t})}^2}^{1/2}\\
        &\quad + \frac{1}{2N}\p{\sum_{i, c, t}\p{  \ipsmall{\W\pow{k}_{c, \cdot} \z_{i, t}}^2 -\ipsmall{\W\pow{k-1}_{c, \cdot} \z_{i, t}}^2}^2}^{1/2}\\
        &\leq \p{\frac{1}{\etaa} + \frac{L_{\U}}{N}} \norm{\U\pow{k-1} - \U\pow{k}}_{\Fro} + C_1 \p{\sum_{c=1}^C \norm{\W\pow{k}_{c, \cdot} - \W\pow{k-1}_{c, \cdot}}_2^2}^{1/2}\\
        &= \p{\frac{1}{\etaa} + \frac{L_{\U}}{N}} \norm{\U\pow{k-1} - \U\pow{k}}_{\Fro} + C_1 \norm{\W\pow{k} - \W\pow{k-1}}_{\Fro}.
    \end{align*}
    Next, under \Cref{asm:bdd1} and the smoothness conditions of \Cref{lem:monotonicity}, we have 
    \begin{align*}
        \norm{G\pow{k}_{\btheta}}_2 &\leq \frac{1}{\etap}\norm{\btheta\pow{k} - \btheta\pow{k-1}}_2 + \norm{\grad_{\btheta} F_\mu(\W\pow{k}, \btheta\pow{k}, \U\pow{k}) - \grad_{\btheta} F_\mu(\W\pow{k-1}, \btheta\pow{k-1}, \U\pow{k})}_2\\
        &\leq \frac{1}{\etap}\norm{\btheta\pow{k} - \btheta\pow{k-1}}_2 + \norm{\grad_{\btheta} F_\mu(\W\pow{k}, \btheta\pow{k}, \U\pow{k}) - \grad_{\btheta} F_\mu(\W\pow{k}, \btheta\pow{k-1}, \U\pow{k})}_2\\
        &\quad + \norm{\grad_{\btheta} F_\mu(\W\pow{k}, \btheta\pow{k-1}, \U\pow{k}) - \grad_{\btheta} F_\mu(\W\pow{k-1}, \btheta\pow{k-1}, \U\pow{k})}_2\\
        &\leq \p{\frac{1}{\etap} + C_2}\norm{\btheta\pow{k} - \btheta\pow{k-1}}_2 + C_3 \norm{\W\pow{k} - \W\pow{k-1}}_{\Fro},
    \end{align*}
    where in the bound on the third term, we use the mixed smoothness constant $\bar{L}$ from \Cref{asm:smoothness}.
    Finally, under \Cref{asm:bdd1} and \Cref{asm:bdd2}, we have
    \begin{align*}
        \norm{G\pow{k}_{\W}}_{\Fro} &\leq \p{\frac{1}{2\etau}+C_4}\norm{\W\pow{k} - \W\pow{k-1}}_{\Fro} + \p{\sum_{c=1}^C \norm{\grad_{\W_{c \cdot}} L(\W\pow{k}) - \grad_{\W_{c \cdot}} L(\W\pow{k, c})}_2^2 }^{1/2}\\
        &\leq \p{\frac{1}{2\etau}+C_4}\norm{\W\pow{k} - \W\pow{k-1}}_{\Fro} + \p{\sum_{c=1}^C \sum_{j=c+1}^C \norm{\grad_{\W_{c \cdot}} L(\W\pow{k, j}) - \grad_{\W_{c \cdot}} L(\W\pow{k, j-1})}_2^2 }^{1/2}\\
        &\leq \p{\frac{1}{2\etau}+C_4}\norm{\W\pow{k} - \W\pow{k-1}}_{\Fro} + \frac{1}{\sigma} \p{\sum_{c=1}^C \sum_{j=c+1}^C \norm{\W\pow{k, j} - \W\pow{k, j-1}}_{\Fro}^2 }^{1/2}\\
        &= \p{\frac{1}{2\etau}+C_4}\norm{\W\pow{k} - \W\pow{k-1}}_{\Fro} + \frac{1}{\sigma} \p{\sum_{c=1}^C \sum_{j=c+1}^C \norm{\W\pow{k, j}_{j \cdot} - \W\pow{k, j-1}_{j\cdot}}_{2}^2 }^{1/2}\\
        &\leq \p{\frac{1}{2\etau}+C_4 + \frac{C}{\sigma}}\norm{\W\pow{k} - \W\pow{k-1}}_{\Fro}.
    \end{align*}
    Combine the three bounds and the inequality $(a + b + c)^2 \leq 3 (a^2 + b^2 + c^2)$ to achieve the desired result.
\end{proof}

\end{document}